\documentclass{article}

\usepackage{microtype}
\usepackage{graphicx}
\usepackage{xcolor}
\usepackage{subfigure}
\usepackage{booktabs} 
\usepackage{algorithm}
\usepackage{algorithmicx}
\usepackage{hyperref}
\usepackage{float}

\usepackage{tikz}
\usetikzlibrary{fit}
\newcommand{\eqnode}[2]{%
  \tikz[remember picture, baseline=(#1.base)]{\node[inner sep=0pt, name=#1]{$\displaystyle #2$};}%
}

\usepackage{caption}

\definecolor{mydarkblue}{rgb}{0,0.08,0.45}
\definecolor{dred}{RGB}{153,80,43}

\definecolor{myblue}{HTML}{2a4d7f}
\makeatletter
\@namedef{ver@algorithmic.sty}{}
\makeatother
\usepackage[accepted]{icml2026}

\usepackage{amsmath}
\usepackage{amssymb}
\usepackage{mathtools}
\usepackage{amsthm}
\usepackage{algorithm}
\usepackage{algpseudocode}

\newcommand{\ablation}[1]{{\color{dred}{#1}}}
\newcommand{\dataset}[1]{\textsc{#1}}
\DeclareRobustCommand{\crblue}[1]{{\color{black}#1}}

\usepackage[capitalize,noabbrev]{cleveref}

\theoremstyle{plain}
\newtheorem{theorem}{Theorem}[section]

\theoremstyle{definition}

\newtheorem{assumption}[theorem]{Assumption}
\theoremstyle{remark}
\newtheorem{remark}[theorem]{Remark}

\usepackage[textsize=tiny]{todonotes}

\icmltitlerunning{Mesh Based Simulations with Spatial and Temporal awareness}

\begin{document}

\twocolumn[
\icmltitle{Mesh Based Simulations with Spatial and Temporal awareness}



\icmlsetsymbol{equal}{*}

\begin{icmlauthorlist}
\icmlauthor{Paul Garnier}{cemef}
\icmlauthor{Vincent Lannelongue}{cemef}
\icmlauthor{Elie Hachem}{cemef}
\end{icmlauthorlist}

\icmlaffiliation{cemef}{CEMEF - Mines Paris PSL}

\icmlcorrespondingauthor{Paul Garnier}{paul.garnier@minesparis.psl.eu}

\icmlkeywords{Machine Learning, CFD, GNN}

\vskip 0.3in
]



\printAffiliationsAndNotice{\icmlEqualContribution} 

\begin{abstract}
Machine Learning surrogates for Computational Fluid Dynamics (CFD), particularly Graph Neural Networks (GNNs) and Transformers, have become a new important approach for accelerating physics simulations. However, we identify a critical bottleneck in the field: while architectures have advanced significantly, the \crblue{common} underlying training paradigms remain bound to naive assumptions, such as node-wise supervision and explicit Euler time-stepping. These legacy choices ignore the stiff dynamics and local flux continuity inherent to numerous partial differential equations resolution methods, such as Finite Element, Difference, or Volume (FEM). In this work, we propose a unified framework to bridge the gap between geometric deep learning and rigorous numerical analysis. We introduce three key innovations: (1) Multi Node Prediction, a \crblue{stencil-level} objective that predicts field values for a node's full local topology, enforcing spatial derivative consistency; (2) Temporal Correction, replacing unstable explicit schemes with a predictor-corrector via temporal Cross-Attention; and (3) Geometric Inductive Biases, leveraging 3D Rotary Positional Embeddings (RoPE) to robustly capture rotational symmetries in unstructured meshes. We evaluate this framework across three architectures (MeshGraphNet, Transolver, and a Transformer) on diverse physics datasets. Our approach yields consistent improvements in accuracy and stability, particularly in long-horizon rollouts, while producing latent representations that generalize to unseen subtasks such as Wall Shear Stress or Pressure prediction. Code is available at \url{https://github.com/DonsetPG/graph-physics}.
\end{abstract}

\section{Introduction}
\label{sec:introduction}
Simulating physical phenomena, particularly CFD, consist of solving Partial Differential Equations (PDEs) over complex geometries discretized as unstructured meshes. While traditional solvers are made of large-scale linear algebra kernels, they suffer from a fundamental inefficiency: every new simulation starts from scratch, ignoring the potentially useful data from previous runs. This computational bottleneck has motivated the rapid adoption of Machine Learning (ML) as a reusable surrogate for physics simulation.

Early data-driven approaches targeted structured grids using Convolutional Neural Networks (CNNs) \cite{Thuerey2018} or Physics-Informed Neural Networks (PINNs) \cite{RAISSI2019686}. However, the geometric complexity of real-world engineering necessitates unstructured discretizations. Notably, Graph Neural Networks (GNNs) and Message-Passing Systems (MPS) \cite{Battaglia2018, pfaff2021learning} have emerged as the dominant paradigm, enabling predictions on irregular domains ranging from weather forecasting \cite{lam2023graphcastlearningskillfulmediumrange} to hemodynamics \cite{Suk_2024}. To handle scaling, recent works have integrated FEM-inspired multigrid strategies \cite{Fortunato2022, garnier2024multigridgraphneuralnetworks} or adopted Transformer-based architectures capable of global attention \cite{wu2024transolverfasttransformersolver}.

Despite these architectural advances, the fundamental \emph{training methodology} has remained largely unchanged since the creation of MeshGraphNet. We argue that the community has over-optimized network architectures while neglecting the numerical priors required to solve PDEs effectively. We identify two specific failures in the current state-of-the-art:

\begin{enumerate}
    \item \textbf{Node-wise prediction:} Standard losses minimize error per node in isolation. In FEM, however, physics is defined by the flux across element boundaries. Predicting a single node ignores the local differential information essential for conservation laws.
    \item \textbf{Explicit Euler scheme:} \hypertarget{upd:euler-claims}{}\crblue{Many common discrete-time} surrogates update the state via $\mathbf{u}_{t+\Delta t} = \mathbf{u}_{t} + \Delta t \Phi(\mathbf{u}_{t})$. This mimics an explicit Euler scheme, \crblue{which can be} numerically unstable for stiff dynamics and prone to error accumulation over long trajectories.
\end{enumerate}

In this paper, we challenge these legacy choices. We propose a methodology that aligns ML surrogates with the principles of rigorous numerical solvers. We argue that a surrogate must predict not just the value at a point, but the local stencil of the solution, and that temporal evolution requires attention-based integration to handle stiff timesteps. We present the different contributions of this paper below.

\begin{itemize}
    \item Multi Node Prediction: We introduce a training task where the model predicts the next state for a node \emph{and} its topological neighbors. This acts as a regularizer that enforces local smoothness and continuity, similar to flux reconstruction in FEM.
    \item Temporal Correction: We replace the explicit Euler residual connection with a multi-step Cross-Attention mechanism. This allows the model to approximate an implicit time-stepping scheme by leveraging spatial predictions, thereby significantly improving stability.
    \item Complete Ablation study: We evaluate multiple improvements of standard architecture, including 3D Rotary Positional Embeddings, attention variants, activation function, and loss objectives. \ablation{We present them in orange}.
    \item Model-Agnostic Evaluations: We validate these contributions across three distinct architectures (MeshGraphNet, Transolver, Transformer) and three \crblue{primary} datasets, \crblue{with additional large-scale, long-range, and geometry-shift stress tests,} demonstrating that our improvements scale across both model size and physical complexity.
\end{itemize}

\subsection{Related Work}

\paragraph{Supervsied loss for GNNs} While a large amount of methods now improve the supervised loss using Physics Informed losses \cite{Cai2021,Liu2025}, we aim our focus towards supervised 
tasks. \cite{belkin2006manifold,zhang2018link} defined supervised loss where the model is trained to predict properties of adjacent nodes (e.g., edge existence or edge labels) using losses defined over $(u,v)\in E$, often implemented as link prediction or pairwise classification with negative sampling. A closely related approach \cite{alsentzer2020subgraph} reframes node learning as subgraph learning: each labeled node induces a 1-hop enclosing subgraph; the GNN produces a representation of the induced neighborhood via pooling; and supervision is applied at the subgraph level rather than directly on the center node embedding. However, most of those approaches are unfit for node regression tasks. A closely related body of work is the study of multiple-token predictions in NLP. \cite{gloeckle2024betterfasterlarge} introduced a supervised task to predict multiple future tokens, thereby shifting learning from single-instance targets to richer, structured targets that better capture local context and dependencies. Later on, a similar approach was used by DeepSeek in \cite{deepseekai2025deepseekv3technicalreport}.

\paragraph{Temporal Schemes for Machine Learning surrogate}
In many classical setups, one can decouple the learned spatial operator \(f_\theta\) (or learned increment \(\Delta u_\theta\)) from the temporal integrator, and then advance the state with higher-order explicit schemes (e.g., midpoint \cite{butcher2016numerical}, Runge-Kutta2 (RK2) \cite{Runge1895,kutta1901beitrag}, RK3/RK4, or multistep Adams-Bashforth \cite{hairer1993solving}) by evaluating the surrogate multiple times per step. \hypertarget{upd:rw-fen}{}\crblue{Several learned simulators also depart from residual next-step updates through continuous-time or solver-coupled formulations. Finite Element Networks, for instance, derive a continuous-time model from FEM and estimate latent dynamics on mesh cells before advancing them with a finite-element mass matrix \cite{lienen2022finiteelementnetworks}.}

\paragraph{\crblue{Long-range interactions on meshes}}
\hypertarget{upd:rw-longrange}{}\crblue{Oversquashing and long-range dependency bottlenecks have also been studied directly in mesh simulators. PIORF rewires mesh graphs using physics-informed Ollivier-Ricci flow to connect bottleneck regions with high-gradient nodes \cite{yu2025piorf}; Hamiltonian-based graph simulators target information preservation during long-range propagation \cite{hoang2026hamiltonian}; and HCMT uses hierarchical contact transformers to propagate collision effects across distant flexible-body regions \cite{yu2024hcmt}. EAGLE further stresses long-range turbulent dynamics with moving sources and varying geometries \cite{janny2023eagle}.}

\paragraph{Positional Encoding}
For 2D imagery, absolute positional encodings adapt 1D sinusoidal features to a grid by factorizing row and column coordinates and either summing or concatenating the resulting embeddings. Many Vision Transformers (ViTs) adopt learned absolute embeddings over patches \cite{dosovitskiy2021imageworth16x16words}, which can be effective but may generalize poorly across resolutions without interpolation. In contrast, relative positional encodings represent pairwise offsets between tokens (e.g., $\Delta x, \Delta y$), enabling attention weights to depend on relative geometry rather than absolute location \cite{shaw2018selfattentionrelativepositionrepresentations,wu2021rethinkingimprovingrelativeposition}. In the case of points clouds, a common strategy injects coordinates directly by passing $(x,y)$ or $(x,y,z)$ through an MLP and adding/concatenating the result to point features \cite{wu2022pointtransformerv2grouped}. However, raw absolute coordinates can entangle pose with content; thus, many approaches emphasize \emph{relative} or \emph{local} encodings constructed from neighborhoods (e.g., $ p_j-p_i$ for neighboring points), distances, and angles \cite{zhao2021pointtransformer}. Regular positional encoding \cite{vaswani2023attention} has also been adapted to 3D points and graphs \cite{alkin2025abuptscalingneuralcfd}. In the case of graphs, one can represent positional encoding with Laplacian Eigen Vectors \cite{dwivedi2021generalizationtransformernetworksgraphs}, learnable positional Encoding \cite{kreuzer2021rethinkinggraphtransformersspectral}, and RandomWalk \cite{dwivedi2022graphneuralnetworkslearnable}. 

\section{Preliminaries}
\label{sec:preliminaries}

\subsection{Mesh as Graph}
\label{subsec:graph}

We consider a mesh as an undirected graph $\mathcal{G} = (\mathcal{V},\mathcal{E})$. 
$\mathcal{V} = \{\mathbf{x}_i\}_{i=1:N}$ is the set of vertices or nodes, where each $\mathbf{x}_i \in \mathbb{R}^{p}$ represents the attributes of node $i$.
$\mathcal{E} = \{\left(\mathbf{e}_k, r_k, s_k\right)\}_{k=1:N^e}$ is the set of edges, where each $\mathbf{e}_k$ represents the attributes of edge $k$, $r_k$ is the index of the receiver node, and $s_k$ is the index of the sender node. 

For certain models (Transformer and Transolver), we omit edge attributes and treat each node as a token. We note $\mathbf{X} = (\mathbf{x}_1, \mathbf{x}_2,...,\mathbf{x}_N)^\top \in \mathbb{R}^{N \times p}$ our input matrix, made of $N$ tokens of dimension $p$. Let $\mathbf{Z} = (\mathbf{z}_1, \mathbf{z}_2,...,\mathbf{z}_N)^\top \in \mathbb{R}^{N \times d}$ be the $d$-dimensional embedding of our nodes. We define $\mathbf{A}$ as the adjacency matrix of our graph, setting $\mathbf{A}_{ij} = \mathbf{A}_{ji} = 1$ if $(i,j) \in \mathcal{E}$ and $0$ otherwise.

\subsection{Models}
\label{subsec:model}

In the remainder of the paper, we study three different models: MeshGraphNet \cite{pfaff2021learning}, Transformer \cite{garnier2025trainingtransformersmeshbasedsimulations}, and Transolver \cite{wu2024transolverfasttransformersolver}. All models follow an Encode-Process-Decode architecture similar to \cite{sanchezgonzalez2020learning}. The Encoder maps the input nodes into a latent space. We then apply $L$ layers of a processor. Finally, the Decoder maps back our outputs into a meaningful space. At each step, our models are auto-regressive, meaning that their output is used as input for the next step of the simulation. In the remainder of the section, we briefly present how Encoders and Decoders are built before introducing the basic concepts of our three models.

\paragraph{Encoder and Decoder}
\label{para:encoder-decoder}

We use a simple two-layer MLP to encode our inputs. Our encoder $\mathcal{E} : \mathbb{R}^p \rightarrow  \mathbb{R}^d$ maps our nodes (resp. our edges) $\mathbf{X} \in \mathbb{R}^{N \times p}$ into a latent space $\mathbf{Z} \in \mathbb{R}^{N \times d}$. The parameter $d$ is shared across all our layers as the main width parameter. The Decoder $\mathcal{D} : \mathbb{R}^d \rightarrow  \mathbb{R}^{2\text{ or }3}$ generates a physical output (in 2 or 3D) from the latent space using two layers as well. 

\paragraph{MeshGraphNet}
\label{para:mgn}

Recall that each spatial processor in MeshGraphNet are message-passing layers that update both the node and edge attributes given the current node and edge attributes:

\begin{equation}
\vspace{-1mm}
\begin{alignedat}{3}
        &\mathbf{e}_k' &&= f^e(\mathbf{e}_k,\mathbf{z}_{r_k},\mathbf{z}_{s_k}) &&\hspace{1em} \forall k \in E
        \label{eq:edge_model} \\
        &\bar{\mathbf{e}}_r' &&= \displaystyle\sum_{e \in E_r'} e &&\hspace{1em} \forall r \in V \\
        &\mathbf{\Tilde{z}}_r &&= [\mathbf{z}_r,\bar{\mathbf{e}}_r'] &&\hspace{1em} \forall r \in V \\
        &\mathbf{z}_r' &&= f^v(\mathbf{\Tilde{z}}_r) &&\hspace{1em} \forall r \in V
\end{alignedat}
\end{equation}

where $f^e$ and $f^v$ are simple Multi-Layer Perceptron (MLP). 

\paragraph{Transformer}
\label{para:transformer}

Each spatial processor from Transformer is defined using $\text{MHA}(\mathbf{Z},\mathbf{A}) = \left(
\text{softmax}\Big( \frac{\mathbf{A}\odot(QK^T)}{\sqrt{d}} \Big)\right)V$ where $Q, K, V$ are linear projections of $\mathbf{Z}$ and $\odot$ is the Hadamard product. \hypertarget{upd:jumpers}{}\crblue{In our Transformer baseline, $\mathbf{A}$ also includes random long-distance edges, or jumpers, to support nonlocal information exchange; graph-rewiring strategies such as PIORF could be combined with the proposed modules.} The overall architecture is then defined as: 

\begin{align}
\vspace{-1mm}
    \mathbf{Z'}_l &= \text{RMSNorm}\big(\text{MHA}(\mathbf{Z}_{l-1}, \mathbf{A}) + \mathbf{Z}_{l-1}\big) && \ell \in [1\ldots L] \label{eq:msa_apply} \\
    \mathbf{Z}_l &= \text{RMSNorm}\big(\text{GatedMLP}(\mathbf{Z'}_{l}) + \mathbf{Z'}_{l}\big) && \ell \in [1\ldots L] \label{eq:mlp_apply} \\
\end{align}

where GatedMLP is a Gated Multi-Layer Perceptron \cite{dauphin2017language}.

\paragraph{Transolver}
\label{para:transolver}

The spatial processors from Transolver learn $M$ latent slices from the $N$ vertices of the mesh and process them using a Physics-Attention operator. The processor then broadcasts the information back to the mesh. The processors can be summarized as follows:

\begin{align}
\vspace{-1mm}
\mathbf{z}_j &= \frac{\sum_{i=1}^N \mathbf{w}_{i,j} \mathbf{x}_i}{\sum_{i=1}^N \mathbf{w}_{i,j}}, \quad \text{with} \quad \mathbf{w} = \text{Softmax}(\text{MLP}(\mathbf{x})) \\
\mathbf{x}'_i &= \sum_{j=1}^M \mathbf{w}_{i,j} \cdot \text{Attention}(\mathbf{z})_j
\end{align}

\begin{remark}

The last two architectures are based on an attention mechanism. \ablation{We also investigate in the appendix several modifications to regular attention: Multi Head Latent Attention \cite{liu2024deepseek} and variants of Gated Delta Nets and Linear Attention \cite{qiu2025gatedattentionlargelanguage,kimiteam2025kimilinearexpressiveefficient}. We add those modifications to our ablation study.}

\end{remark}

\section{Methodology}
\label{sec:methodo}

\subsection{Multi Node Prediction}
\label{subsec:mnp}

\begin{figure}[!t]
  \centering
  \includegraphics[width=3.3in]{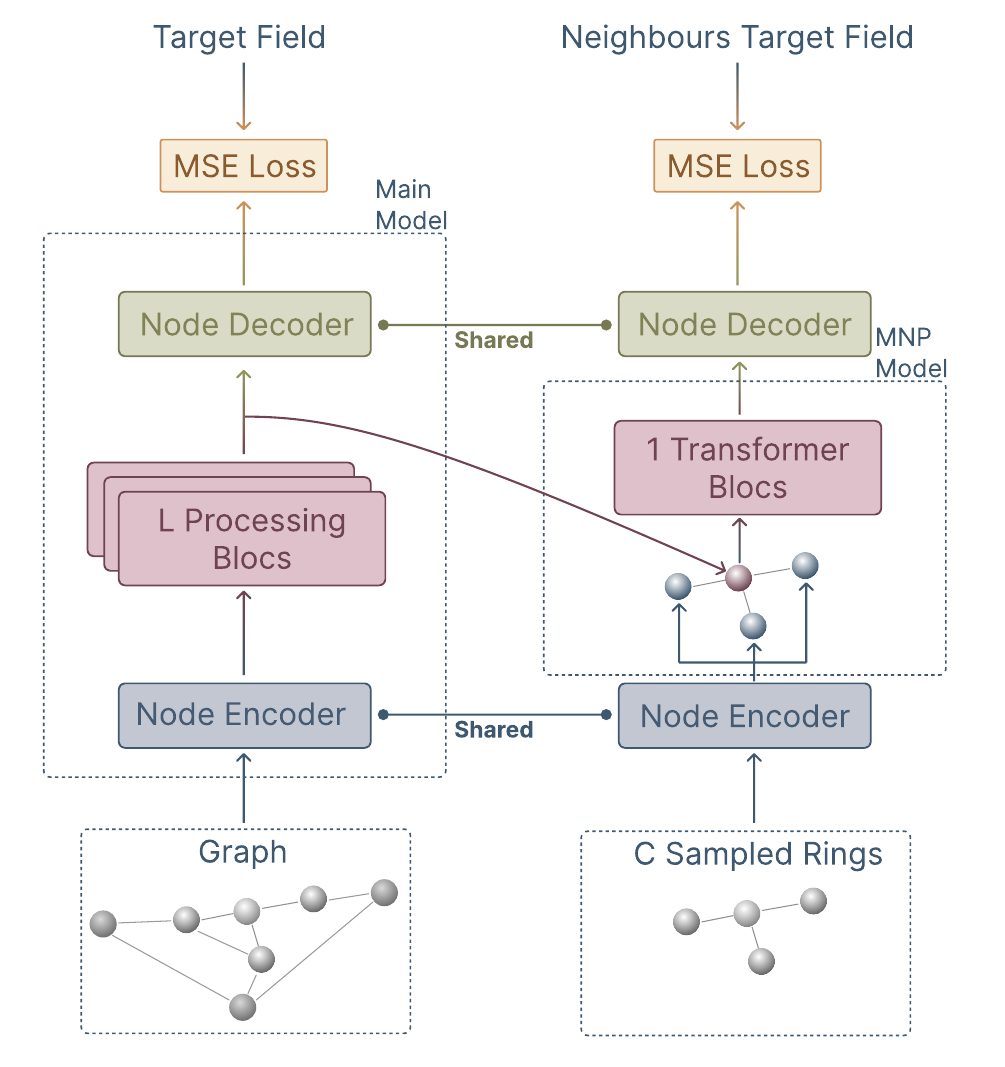}
\caption{\small\textbf{Multi Node Prediction} We process each node with a given model. Before decoding each node, we construct rings that consist of a latent node and freshly encoded neighbors. We then train a small cross-attention layer to predict the fields of each neighbor, while most relevant information lives in the latent central node.
  }
  \label{fig:MTP}
\end{figure}

Recall that \crblue{many common} approaches use a regression task on a per-node basis, where $\mathbf{z^L_i}$ represents the embedding of node $i$ after $L$ layers:

\begin{equation}
\mathcal{L}_{\text{main}}(\theta)=\frac{1}{N}\sum_{i=1}^N \ell(\mathcal{D}(\mathbf{z^L_i}), y_i)
\end{equation}

which does not explicitly enforce that the latent representation of $\mathbf{z^L_i}$ carries any information about the neighborhood (or stencil) around $i$. On irregular meshes with long-range, anisotropic dynamics, this omission manifests in several potential pathologies: (i) a lack of relevant information related to flux reconstruction in FEM; (ii) oversquashing, since relevant neighbor information must be compressed through message passing instead of living inside each node directly; (iii) stability issues, where small inconsistencies on a stencil around $i$ can amplify over rollouts, instead of having the ability to check itself against its neighbors. 

To that end, we make a small but effective modification to our supervised task: Multi-Node Prediction. Let $C\!\subset\!V$ be a set of centers sampled per step and $\mathcal{N}(i)$ the 1‑hop neighborhood of $i$. \hypertarget{upd:mnp-sampling}{}\crblue{Each center is sampled uniformly from internal nodes only; boundary nodes are excluded as centers, the number of centers is fixed per batch, and centers are resampled at each training step.} We form a \emph{star sequence}:

\begin{equation}
\mathbf{S_i} \;=\; \big[\mathbf{z^L_i} \;\;\; \mathbf{z^0_{j_1}} \;\;\; \dots \;\;\; \mathbf{z^0_{j_{|\mathcal{N}(i)|}}}\,\big]
\end{equation}

where $\mathbf{z^0_{j_k}} = \mathcal{E}(\mathbf{x^0_{j_k}})$ is the encoded feature of node $j_k$, a neighbor of node $i$, represented by its latest latent version $\mathbf{z^L_i}$. The goal is to force the latent representation of node $i$ to predict its own next step value, \textbf{but also} the one of its neighbors, only from their encoded values and its own final latent representation. We define a small one-layer transformer $T_{\theta'}$ using ring self-attention (neighbors attend to the center and to each other, but stars are isolated and do not attend to themself) to yield:

\begin{equation}
    \mathbf{O_i} \;=\; T_{\theta'}(\mathbf{S_i})\in\mathbb{R}^{(|\mathcal{N}(i)|\!+\!1)\times d}
\end{equation}

We then compute the same supervised loss as before between each decoded neighbor (using the same decoder $\mathcal{D}$), and its target field:

\begin{equation}
\label{eq:spMTP}
\mathcal{L}_{\text{MNP}}(\theta, \theta ')
\;=\;
\frac{1}{|C|}\sum_{i\in C}\frac{1}{|\mathcal{N}(i)|}\sum_{j\in\mathcal{N}(i)}
\ell(\mathcal{D}(\mathbf{O_{ij}}), y_j)
\end{equation}

and define the total objective $\mathcal{L}=\mathcal{L}_{\text{main}}+\alpha\,\mathcal{L}_{\text{MNP}}$ with $\alpha = 0.2$. The overall architecture is presented in \autoref{fig:MTP}. Overall, we shift $\mathbf{z^L_i}$ from being only predictive of $y_i$ to being jointly predictive of $\{ y_i ,\{y_j\}_{j\in\mathcal{N}(i)}\}$. More precisely, in \autoref{thm:sobolev}, we prove that patch reconstruction accuracy controls a discrete gradient (hence flux) error.

\hypertarget{upd:eq-format}{}\begin{equation}
\begin{aligned}
\frac{1}{N}\sum_{i=1}^N
\big\| \widehat{\nabla}_h u(\mathbf{x}_i) - \nabla u(\mathbf{x}_i) \big\|^2
&\;\le\;
\frac{C}{h^2}\Big(\mathcal{L}_{\mathrm{MNP}}+\mathcal{L}_{\mathrm{node}}\Big)
\;+\;
\\&
C\,h^2\,\|u\|_{C^2(\Omega)}^2
\end{aligned}
\end{equation}

This proves that $\mathcal{L}_{\mathrm{MNP}}$ is a principled proxy for a Sobolev-type regularization that aligns the learned representation with the PDE's spatial operator.

\begin{algorithm}[h]
\caption{Multi-Node Prediction}
\label{alg:mnp}
\begin{algorithmic}[1]

\State Sample centers $C \subseteq V_{\mathrm{int}}$ \crblue{uniformly from internal nodes} with $|C|=m$
\ForAll{$i \in C$}
    \State $\mathcal{N}(i) \gets \{j \in V : (i,j)\in E\}$ \Comment{1-hop}
    \State Order $\mathcal{N}(i)$ as $(j_1,\dots,j_{K_i})$ where $K_i=|\mathcal{N}(i)|$
    \State $S_i \gets [\,\mathbf{z}^L_i;\ \mathbf{z}^0_{j_1};\ \dots;\ \mathbf{z}^0_{j_{K_i}}\,]$
    \Comment{star sequence}
\EndFor

\State $O_i \gets T_{\theta'}(S_i)$
\Comment{use a block-diagonal attention mask so that different stars do not attend}
\ForAll{$i \in C$}
    \For{$k=1$ \textbf{to} $K_i$}
        \State $j \gets j_k$
        \State $\hat y_{j\mid i} \gets \mathcal{D}(O_{i,k+1})$
        \Comment{token $k{+}1$ corresponds to neighbor $j_k$ (token 1 is the center)}
        \State $\mathcal{L}_{\mathrm{MNP}} \gets \mathcal{L}_{\mathrm{MNP}} + \frac{1}{|C|\,K_i}\,
        \ell\!\big(\hat y_{j\mid i},\,y_j\big)$
    \EndFor
\EndFor
\end{algorithmic}
\end{algorithm}

The computational overhead is only minor, since we simply add a single cross-attention layer and re-use the encoder $\mathcal{E}$ and decoder $\mathcal{D}$. Having hooks within the training framework also allows for most operations to be run only once. To keep our addition lightweight even on very large meshes, we pack the star sequences in a vectorized fashion with padding, capping the number of neighbors per center to $K$. Our additional module is used only during training, resulting in a 5\% increase in training time. During our ablation study, we only examine the impact of this new task on performance, the effect of the number of centers we randomly select at each step (including the increase in training time), and the impact of this task on the nodes' latent representations. \ablation{We add this method in our ablation study, and both study the impact of the number of centers chosen and other supervised approaches, such as an $L_2$ loss on the field's gradient.}

\subsection{Temporal Correction}

Recall that \crblue{many discrete-time machine learning surrogates generate} next-step predictions with the operation $\mathbf{u}_{t+1} = \mathbf{u}_{t} +\Phi(\mathbf{u}_{t})$, where $\Phi$ is made of $L$ spatial processing layers: $\mathbf{Z}^{\ell+1} =\mathbf{Z}^\ell + \phi_\ell(\mathbf{Z}^\ell)$. We argue that one should actually see the $L$ spatial updates as $L$ predictor-corrector \cite{Gragg1964} layers, \textit{i.e.} as $L$ discretizations of the interval $[t,t+\Delta t]$, with the following formulation:

\vspace{-3mm}
\begin{equation}
\begin{aligned}
\tilde{\mathbf{Z}}^{\,\ell+1}
&= \mathbf{Z}^{\ell} + \phi_\ell^s(\mathbf{Z}^{\ell})
\\
\mathbf{Z}^{\ell+1}
&= \mathbf{Z}^{\ell} + \phi_\ell^t\!\big(\tilde{\mathbf{Z}}^{\,\ell+1},\,\mathbf{Z}^{\ell}\big)
\end{aligned}
\label{eq:pc}
\end{equation}
\vspace{-6mm}

where $\phi_\ell^s$ is the same (spatial) processor as defined before, specific to each architecture, and $\phi_\ell^t$ is a learnable temporal corrector. Importantly, we do not increase the temporal context of our model; we still use a single timestep to predict the next. However, we interpret each processor block $\ell$ as a predictor in $[t +\Delta t\frac{\ell}{L}, t + \Delta t\frac{\ell+1}{L}]$. Equation~\eqref{eq:pc} separates space from time while keeping a residual identity path that preserves gradients and reduces drift across rollouts.

Anisotropic PDEs require directionally selective stencils. While standard approaches are often largely isotropic, we implement our temporal corrector using cross-attention with queries from the predicted state, keys and values from the previous state, thereby routing updates along directions encoded by the evolving field itself. To overcome classical GNN pathologies such as oversquashing and oversmoothing, we also introduce a gating mechanism in the temporal corrector. 

Let $\mathbf{C} = [\tilde{\mathbf{Z}}^{\,\ell+1},\,\mathbf{Z}^{\ell}]$. Let $G([\tilde{\mathbf{Z}},\mathbf{Z}]) \in [0,1]^N$ be a gating mechanism implemented as a 2-layer MLP with a softmax for the final activation function. Let $M\!\big([\mathbf{h}^{\ell+1},\mathbf{h}^{\ell}]\big)$ be a mixing MLP with 2 layers. 
We implement the temporal corrector as follows: 

\begin{equation}
\begin{aligned}
\eqnode{L1}{\tilde{\mathbf{Z}}^{\,\ell+1}}
  &\eqnode{R1}{= \mathbf{Z}^{\ell} + \phi_\ell^s(\mathbf{Z}^{\ell})}
\\[1.5em] 
\eqnode{L2}{\bar{\mathbf{Z}}^{\,\ell+1}}
  &\eqnode{R2}{= G(\mathbf{C}) \odot
      \text{CA}(\mathbf{C})}
\\
\eqnode{L3}{\phi_\ell^t\!\big(\mathbf{C}\big)}
  &\eqnode{R3}{= \bar{\mathbf{Z}}^{\,\ell+1} + M(\mathbf{C})}
\\[1.5em]
\eqnode{L4}{\mathbf{Z}^{\ell+1}}
  &\eqnode{R4}{= \mathbf{Z}^{\ell} + \phi_\ell^t\!\big(\mathbf{C}\big)}
\end{aligned}
\label{eq:pc2}
\end{equation}

\begin{tikzpicture}[remember picture, overlay]
  \definecolor{myfill}{HTML}{C3C8D1}
  \definecolor{myline}{HTML}{3E586F}
  \definecolor{myfillred}{HTML}{DEC1CA}
  \definecolor{mylinered}{HTML}{704351}

  \tikzset{
    mybox/.style={
      draw=myline,
      dashed,
      rounded corners,
      inner sep=5pt,
      fill=myfill,
      fill opacity=0.15,
      thick
    },
    mylabel/.style={
      anchor=east,
      font=\sffamily\small\bfseries,
      xshift=-0.5cm,
      text=myline
    }
  }

  \node[mybox, fit=(L1)(R1)] (b1) {};
  \node[mylabel, align=right] at (b1.west) {Spatial};

  \node[mybox, draw=mylinered, fill=myfillred, fit=(L2)(R2)(L3)(R3)] (b2) {};
\node[mylabel, text=mylinered, align=right] at (b2.west) {Temporal\\Cross Att.};

  \node[mybox, fit=(L4)(R4)] (b3) {};
  \node[mylabel, align=right] at (b3.west) {Correction};

\end{tikzpicture}

where $\text{CA}$ is cross-attention and $\odot$ is the Hadamard product. During our ablation study, we investigate both the impact of the gating and mixing mechanism, and the trade-off between the frequency of said temporal correction and the increase in training time. 
In \autoref{thm:theta_stability}, we show that the temporal correction block enlarges the class of stable one-step maps that the network can realize compared to an explicit residual update, thus \crblue{allowing it to realize a larger class of stable one-step maps under the theorem assumptions than} the standard approach. \ablation{We add a temporal corrector in our ablation study, and investigate frequently to apply it and the impact of the gating and mixing mechanism.}

\subsection{3D RoPE}

\begin{figure*}[ht!]
  \centering
  \includegraphics[width=0.99\textwidth]{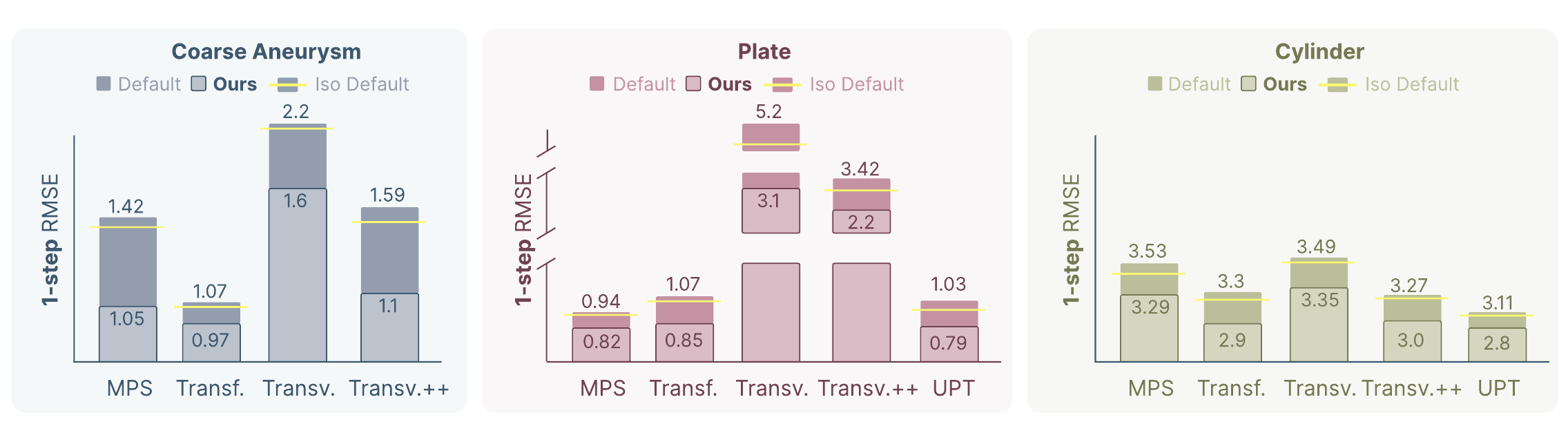}
  \caption{\small\textbf{Results on 1-step RMSE.} We see improvements for all models and all datasets. Since our approach incurs a slight increase in the number of trainable parameters, we also compute the metric for the original architecture with the same number of parameters as ours (represented by the yellow lines).
  }
  \label{fig:improv-dataset}
\end{figure*}

Irregular meshes exhibit strong, orientation‑dependent transport (e.g., advection along streamlines, anisotropic diffusion along fiber directions). Message‑passing baselines and geometry‑agnostic attention tend to learn isotropic mixing, hurting long‑range fidelity. While using relative position in edges, or absolute position in nodes, can mitigate those issues, we argue that they should work in relation to another sort of positional encoding.

Injecting 2D/3D rotary structure into queries Q and keys K makes attention explicitly sensitive to relative 2D/3D offsets, which improves directional selectivity and long‑horizon accuracy in our rollouts while preserving efficiency for neighbor‑only graphs.

RoPE applies an axis‑wise orthogonal rotation to Q and K with angles proportional to position. For one 2D channel pair $(u,v)$ and scalar angle $\theta$, $\mathrm{Rot}_\theta[u,v]=[u\cos\theta - v\sin\theta,\; u\sin\theta+v\cos\theta]$. 
If $R(\theta_i)$ and $R(\theta_j)$ are rotations applied to $q_i$ and $k_j$, then 
\[
\langle R(\theta_i)q,\, R(\theta_j)k\rangle \;=\; \langle q,\, R(\theta_j-\theta_i)k\rangle
\]
so the score depends on \emph{relative} angle. Extending to 3D, we assign disjoint channel pairs to the $x$, $y$, and $z$ axes and rotate with angles $\theta^{(a)}_{i,r}=\omega_r\,\tilde{p}_{i,a}$, where $\tilde{p}_i=p_i-\bar{p}$ are centered coordinates and $\{\omega_r\}$ are multi‑scale frequencies. The dot‑product remains a function of per‑axis \emph{relative} offsets $(\tilde{p}_j-\tilde{p}_i)$, which (i) encodes oriented distances that match anisotropic PDE fluxes and (ii) is translation‑invariant by construction. 

For each head of width $d_h$, we allocate channel pairs across the three axes. Let $\mathcal{I}_x,\mathcal{I}_y,\mathcal{I}_z$ index those pairs. With centered coordinates $\tilde{p}_i=(\tilde{x}_i,\tilde{y}_i,\tilde{z}_i)$ and frequencies $\{\omega_r\}$, we define:
\[
\begin{aligned}
&\forall (2r,2r{+}1)\!\in\!\mathcal{I}_x:\; \\&
\begin{bmatrix}\tilde{q}_{i,2r}\\ \tilde{q}_{i,2r{+}1}\end{bmatrix} =
\begin{bmatrix}\cos(\omega_r \tilde{x}_i) & -\sin(\omega_r \tilde{x}_i)\\ \sin(\omega_r \tilde{x}_i) & \cos(\omega_r \tilde{x}_i)\end{bmatrix}
\begin{bmatrix}q_{i,2r}\\ q_{i,2r{+}1}\end{bmatrix}\\
&\text{and similarly for } \mathcal{I}_y,\mathcal{I}_z, \text{and the keys } \tilde{k}_{j}
\end{aligned}
\]

We then compute attention using $(\tilde{q}_i,\tilde{k}_j)$. This preserves neighbor sparsity, adds no parameters, and composes cleanly with all attention-based operations.

\ablation{During our ablation study, we investigate whether RoPE helps with the performance or not. We also compare it to several other positional encoding approaches: a learnable positional embedding, a learnable relative position bias, and weighting the adjacency matrix by nodes' Euclidean distances.}

\section{Experiments}

\paragraph{Datasets} We evaluated our models across different use cases. The first dataset is the flow past a cylinder \cite{pfaff2021learning} simulated using the COMSOL solver. The second dataset is the flow inside a brain aneurysm \cite{aneurysmdataset}, simulated using the CIMLIB \cite{cimlib} solver. Finally, the last dataset is the DeformingPlate \cite{pfaff2021learning}. Details such as the attributes used and the simulation time step $\Delta t$ are shown in Table \ref{tab:datasets-details} and \autoref{fig:datasetsdetails}. \hypertarget{upd:datasets-main}{}\crblue{Cylinder and DeformingPlate use the full MeshGraphNet configuration with 1000 training trajectories and 100 test trajectories; Coarse Aneurysm uses 100 training trajectories and 20 test trajectories. The trajectories contain 600, 400, and 80 steps for Cylinder, DeformingPlate, and Aneurysm, respectively; $\Delta t=0.01$s for Cylinder and Aneurysm.} While the Cylinder and DeformingPlate datasets consist of relatively small graphs (between 1 and 2,000 nodes), the brain aneurysm meshes are one order of magnitude larger. \hypertarget{upd:bc}{}\crblue{Following MeshGraphNet, each node is assigned a node type. During prediction, velocity boundary values are enforced for all node types except Normal and Outflow, while pressure is enforced only at Inlet nodes; MNP centers are sampled from internal nodes, although boundary nodes may still appear as neighbors and are handled with the same boundary-value enforcement rules.}

\paragraph{Training procedures}To evaluate our models, we use the 1-step RMSE and the All-Rollout RMSE defined in \cite{sanchezgonzalez2020learning}. \hypertarget{upd:metrics}{}\crblue{Unless stated otherwise, RMSE values are averaged over predicted variables, nodes, timesteps, trajectories, and five random seeds; lower is better, and standard deviations are reported in ablation plots when space permits.} Models were trained for 20 epochs with an AdamW \cite{loshchilov2019decoupledweightdecayregularization} optimizer using $\beta _1 = 0.9, \beta _2 = 0.95$. We used a learning rate schedule with a warm-up period of 1000 steps, cosine decay, and a maximal learning rate of $10^{-3}$. Each model is trained on 5 different seeds. We reproduce the standard deviation obtained from the ablations. We introduce noise to our inputs using the same strategy as \cite{sanchezgonzalez2020learning}.
More specifically, we add random noise $\mathcal{N}(0,\sigma)$ to dynamical variables (see \autoref{tab:noises}) at each training step and train solely on next-step prediction.

\section{Results}

Across the three different datasets and the three different models, we obtain improvements on all rollout metrics between 20 and 30 percent, for only a 10\% increase in both training or inference time, using the optimal configurations defined in the ablation.
(See \autoref{fig:improv-dataset} for 1-step improvements and \autoref{fig:improv} for more general results). We want to emphasize that our approach improves results across highly diverse datasets (2D, 3D, fluids, solids, varying numbers of nodes) and models (message-passing-based GNN, attention-based GNN, attention-based point clouds). Importantly, models that match the total number of parameters in our architecture by increasing their width obtain only minor improvements. We believe this makes our approach a highly efficient method for improving the performance of mesh-based simulation, regardless of the dataset or architecture. 

\hypertarget{upd:ood}{}\crblue{Finally, we tested geometry-shift generalization by training on a single-cylinder configuration and evaluating on multiple cylinders and on different shapes. These supplementary rollouts show that the proposed modules remain beneficial under unseen geometries and new vortex/recirculation patterns, but this test should be interpreted as a geometry-shift evaluation rather than a broad Reynolds-number generalization study.}

\begin{figure*}[ht!]
  \centering
  \includegraphics[width=0.99\textwidth]{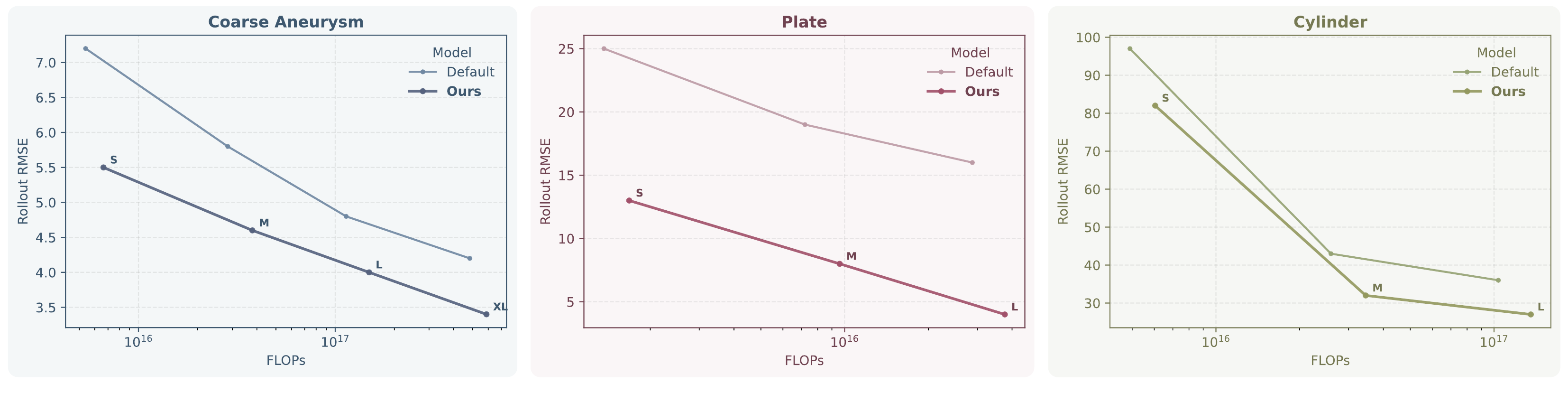}
  \caption{\textbf{Scaling with model size.}
  All rollout RMSE for a Transformer model on the three different datasets. Even when training models larger and larger, our approach keeps scaling similarly to the previous model.}
  \label{fig:scalesize}
\end{figure*}

We also study whether our approach scales with training time and the model's size, or if those improvements decrease as we scale up. In \autoref{fig:scalesize}, we present the results of a Transformer architecture on each dataset, as we scale the parameters from 500k (S), to 3M (M), 14M (L), and 51M (XL). For the Aneurysm and the Deforming Plate dataset, our approach scales with the number of parameters, yielding consistent results. On the Cylinder dataset, while our approach still yields better results, the improvements are less consistent. This may be because of benchmark saturation, as noted in \cite{garnier2025trainingtransformersmeshbasedsimulations}. 

\begin{figure}[th!]
  \centering
  \includegraphics[width=3.3in]{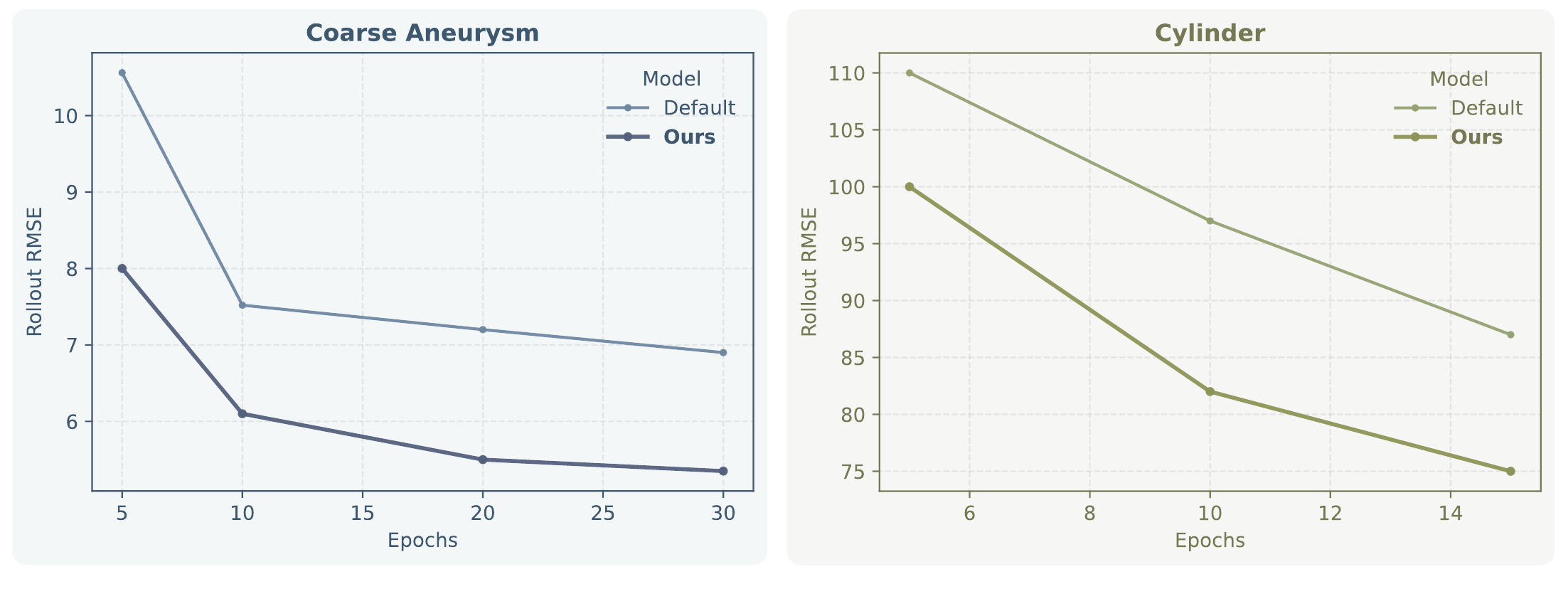}
  \caption{\textbf{Scaling with training time}
  All rollout RMSE for a MeshGraphNet architecture of two different datasets, for different training schedules.}
  \label{fig:scaletime}
\end{figure}

In \autoref{fig:scaletime}, we study a message passing GNN on the Aneurysm and the Cylinder dataset. We scale the training times from 5 to 30 epochs for the Aneurysm dataset, and from 5 to 15 for the Cylinder dataset. Similarly to the previous experiment, we find that our approach robustly scales with training time, yielding consistent improvements across different numbers of gradient descent steps.

\subsection{Ablation Studies}

Our ablation study was run on the Aneurysm dataset, where each graph has, on average, \crblue{20k} nodes, and we used Transformer models. 

\paragraph{Multi Node Prediction} We study the impact of our Multi Node Prediction task, and the impact of the number of centers we use. Overall, we find that using as few as 256 centers (around 2.5\% of the nodes) per trajectory yields improvements on both 1-step and all-rollout RMSE. We also find that improvements scale with the number of centers selected. Using 1024 centers gives twice the improvements from 256 centers. Results are presented in \autoref{fig:ablation}-middle column. We also demonstrate in \autoref{fig:other-ablation} that our method offers much better results than replacing $\mathcal{L}_{\mathrm{MNP}}$ by a supervised loss on the field's gradient $\ell(\nabla \hat y_i, \nabla y_i)$. \hypertarget{upd:mnp-sampling-ablation}{}\crblue{We also tested MNP centers sampled uniformly, biased toward the DeformingPlate contact/constraint region, and biased away from it; the three configurations gave similar improvements, with no statistically significant difference from uniform internal-node sampling.}

\paragraph{Temporal Correction} We find that our temporal correction yields strong improvement over 1-step and all-rollout RMSE. We first study the impact of our gating and mixing mechanism. Only using cross-attention worsens the performance, to a higher degree than using no temporal correction. This highlights the importance of those two architectures. We also study the impact of the number of temporal corrections. We compare using a correction term after each spatial block, or only after the last one. Both improve the performance, and there's a strong trade-off to find between computation time and better performance. However, those extra parameters are better consumed here than in extra spatial computation, as shown in our previous section. Finally, we also tried to add an extra loss term after each temporal correction. That is, after each intermediate $\ell$ step, we decoded the latent graph and used it as an intermediate supervised signal. This doesn't work well at all. Results are presented in \autoref{fig:ablation}-right column.

\paragraph{3D RoPE} Overall, 3D RoPE provides the same performance as using absolute coordinates in the node feature. However, using both at the same time yields strong improvements. On the other hand, any other tested solution provides worse performance. We studied a learned positional embedding, that is, a learnable mapping between absolute position and $\mathbb{R}^d$, which we could sum with the node features. A learnable relative bias, that is, a learnable mapping between differences in absolute position and $\mathbb{R}^d$, we could sum after computing $Q^tK$. Finally, we also tried for the Transformer architecture to replace the unitary adjacency matrix with a weighted version. None of these three approaches yields improvements. Results are presented in \autoref{fig:ablation}-left column.

The rest of the ablation are presented in \autoref{sec:add-exp}.

\begin{figure*}[tbh!]
  \centering
  \includegraphics[width=0.99\textwidth]{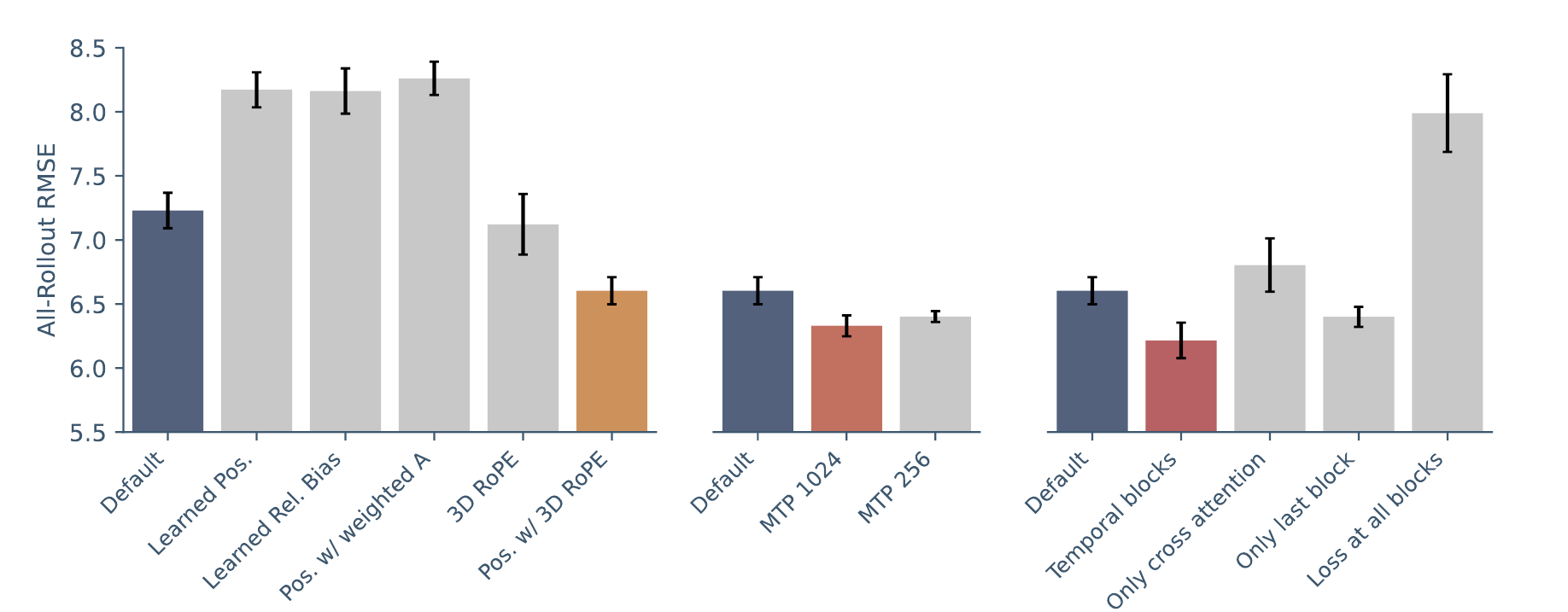}
  \caption{\textbf{Ablation Studies}
  We present our ablation studies for all three improvements and their alternatives. Ablations related to RoPE are in the left, to Multi-node Prediction in the middle, and to Temporal Correction in the right.}
  \label{fig:ablation}
\end{figure*}

\begin{figure*}[tbh!]
  \centering
  \includegraphics[width=0.99\textwidth]{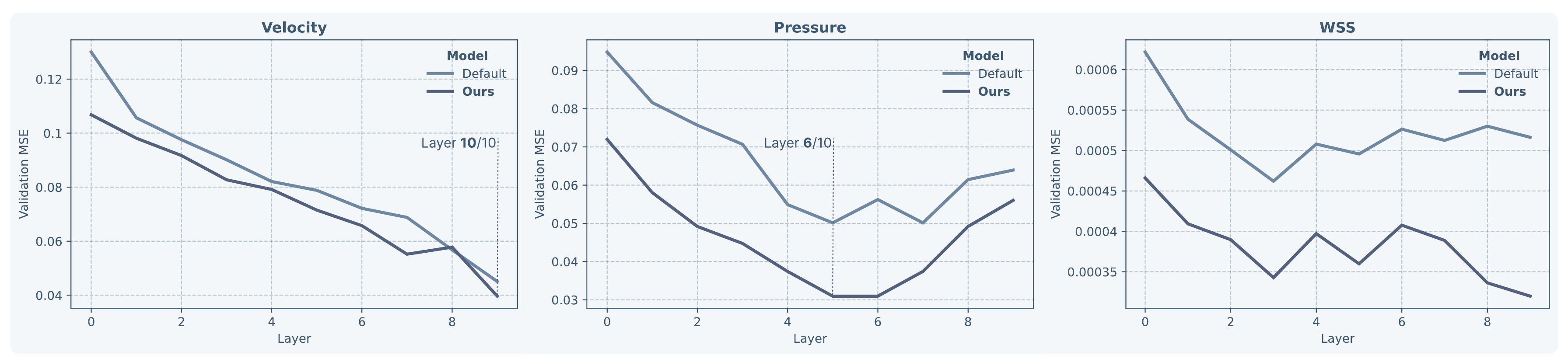}
  \caption{\textbf{Subtasks prediction.}
  We compute a regression task on next-step fields: velocity, pressure, and WSS, using latent representations from different layers. Importantly, the \emph{Default} architecture is not always the same per ablation. For \crblue{example}, the default architecture for Multi Node Prediction is an architecture using RoPE.}
  \label{fig:finetuning}
\end{figure*}

\subsection{Other Experiments}

\paragraph{Latent Representation} To study the impact of Multi-Node Prediction on Latent Representation, we conduct a simple experiment. During training, after each step of a model, we compute its $L_2$ difference with the latent target (by latent target, we mean the encoded version of the next-step target). In \autoref{fig:latent}, we present the results on a message-passing GNN with and without Multi-Node Prediction. We obtain similar results on all datasets and all architectures. First, we can see that this approach offers a latent representation that is much closer to the target. Second, while not optimizing it at all, we can see that this loss remains stable or even decreases, while it increases without Multi-Node Prediction. We believe this is a key explanation as to why this new supervised task offers such improvements on long-term rollout.

\paragraph{Predicting subtasks} We used the latent representation of each layer (layer 0 being post-Encoder) of a Transformer with 10 layers, with and without Multi-Node Prediction training. We then fit a simple 2-layer MLP to predict different quantities such as the next step Velocity, Pressure and Wall Shear Stress (WSS) \autoref{fig:finetuning}. It's important to note that while the current step velocity is present in the inputs, that is not the case for the pressure and the WSS. We first find that the latent representation obtained using Multi-Node Prediction offers lower loss. This will be constant throughout all experiments. While predicting the pressure, similarly to some tasks in computer vision, we find that the errors decrease before increasing again, a sweet spot being around 2/3 of the spatial computation. Finally, since it is extremely related to the velocity field and its differential outcomes, WSS prediction decreases with the amount of spatial processing. Interestingly, a model trained without Multi-Node Prediction doesn't see such a decrease. 

\paragraph{\crblue{Aneurysm physical metrics}} \hypertarget{upd:physical-metrics}{}\crblue{Because RMSE alone does not guarantee physical validity, we also evaluated spatially averaged WSS over all timesteps on three Aneurysm test cases and mass flow rate over time at four 2D planes inside the main artery. These supplementary plots show stable physical metrics throughout autoregressive rollout, even though the final model is not trained with a physics-informed loss.}

\section{Conclusion}

In this study, we presented 3 improvements for standard approaches to mesh-based simulations. We demonstrated that those approaches increase performance on both short-term and long-term prediction, while improving the latent representation of the graph throughout training. Importantly, our methods are architecture and dataset-agnostic \crblue{across the tested settings}.


\section*{Impact Statement}

This paper presents work whose goal is to advance the field of 
Machine Learning. There are many potential societal consequences 
of our work, none which we feel must be specifically highlighted here.

\section*{\crblue{Code availability}}
\hypertarget{upd:code}{}\crblue{Code, configuration files, and evaluation scripts will be released with the camera-ready supplementary material or the associated public repository.}

\bibliography{example_paper,added_camera_ready_refs}

@inproceedings{lienen2022finiteelementnetworks,
  title = {Learning the Dynamics of Physical Systems from Sparse Observations with Finite Element Networks},
  author = {Lienen, Marten and G{\"u}nnemann, Stephan},
  booktitle = {International Conference on Learning Representations},
  year = {2022}
}

@inproceedings{yu2025piorf,
  title = {{PIORF}: Physics-Informed Ollivier--Ricci Flow for Long-Range Interactions in Mesh Graph Neural Networks},
  author = {Yu, Youn-Yeol and Choi, Jeongwhan and Park, Jaehyeon and Lee, Kookjin and Park, Noseong},
  booktitle = {International Conference on Learning Representations},
  year = {2025}
}

@inproceedings{hoang2026hamiltonian,
  title = {Improving Long-Range Interactions in Graph Neural Simulators via Hamiltonian Dynamics},
  author = {Hoang, Tai and Trenta, Alessandro and Gravina, Alessio and Freymuth, Niklas and Becker, Philipp and Bacciu, Davide and Neumann, Gerhard},
  booktitle = {International Conference on Learning Representations},
  year = {2026}
}

@inproceedings{yu2024hcmt,
  title = {Learning Flexible Body Collision Dynamics with Hierarchical Contact Mesh Transformer},
  author = {Yu, Youn-Yeol and Choi, Jeongwhan and Cho, Woojin and Lee, Kookjin and Kim, Nayong and Chang, Kiseok and Woo, ChangSeung and Kim, Ilho and Lee, SeokWoo and Yang, Joon Young and Yoon, Sooyoung and Park, Noseong},
  booktitle = {International Conference on Learning Representations},
  year = {2024}
}

@inproceedings{janny2023eagle,
  title = {{EAGLE}: Large-scale Learning of Turbulent Fluid Dynamics with Mesh Transformers},
  author = {Janny, Steeven and B{\'e}n{\'e}teau, Aur{\'e}lien and Nadri, Madiha and Digne, Julie and Thome, Nicolas and Wolf, Christian},
  booktitle = {International Conference on Learning Representations},
  year = {2023}
}

@misc{vaswani2023attention,
      title={Attention Is All You Need}, 
      author={Ashish Vaswani and Noam Shazeer and Niki Parmar and Jakob Uszkoreit and Llion Jones and Aidan N. Gomez and Lukasz Kaiser and Illia Polosukhin},
      year={2017},
      eprint={1706.03762},
      archivePrefix={arXiv},
      primaryClass={cs.CL}
}

@misc{pfaff2021learning,
      title={Learning Mesh-Based Simulation with Graph Networks}, 
      author={Tobias Pfaff and Meire Fortunato and Alvaro Sanchez-Gonzalez and Peter W. Battaglia},
      year={2021},
      eprint={2010.03409},
      archivePrefix={arXiv},
      primaryClass={cs.LG}
}

@misc{sanchezgonzalez2020learning,
      title={Learning to Simulate Complex Physics with Graph Networks}, 
      author={Alvaro Sanchez-Gonzalez and Jonathan Godwin and Tobias Pfaff and Rex Ying and Jure Leskovec and Peter W. Battaglia},
      year={2020},
      eprint={2002.09405},
      archivePrefix={arXiv},
      primaryClass={cs.LG}
}

@ARTICLE{aneurysmdataset,

AUTHOR={Goetz, Aurèle  and Jeken-Rico, Pablo  and Pelissier, Ugo  and Chau, Yves  and Sédat, Jacques  and Hachem, Elie },

TITLE={AnXplore: a comprehensive fluid-structure interaction study of 101 intracranial aneurysms},

JOURNAL={Frontiers in Bioengineering and Biotechnology},

VOLUME={12},

YEAR={2024},

URL={https://www.frontiersin.org/journals/bioengineering-and-biotechnology/articles/10.3389/fbioe.2024.1433811},

DOI={10.3389/fbioe.2024.1433811},

ISSN={2296-4185}}

@misc{kreuzer2021rethinkinggraphtransformersspectral,
      title={Rethinking Graph Transformers with Spectral Attention}, 
      author={Devin Kreuzer and Dominique Beaini and William L. Hamilton and Vincent Létourneau and Prudencio Tossou},
      year={2021},
      eprint={2106.03893},
      archivePrefix={arXiv},
      primaryClass={cs.LG},
      url={https://arxiv.org/abs/2106.03893}, 
}

@misc{dwivedi2021generalizationtransformernetworksgraphs,
      title={A Generalization of Transformer Networks to Graphs}, 
      author={Vijay Prakash Dwivedi and Xavier Bresson},
      year={2021},
      eprint={2012.09699},
      archivePrefix={arXiv},
      primaryClass={cs.LG},
      url={https://arxiv.org/abs/2012.09699}, 
}

@article{Thuerey2018,
  author       = {Nils Thuerey and
                  Konstantin Weissenow and
                  Harshit Mehrotra and
                  Nischal Mainali and
                  Lukas Prantl and
                  Xiangyu Hu},
  title        = {Well, how accurate is it? {A} Study of Deep Learning Methods for Reynolds-Averaged
                  Navier-Stokes Simulations},
  journal      = {CoRR},
  volume       = {abs/1810.08217},
  year         = {2018},
  url          = {http://arxiv.org/abs/1810.08217},
  eprinttype    = {arXiv},
  eprint       = {1810.08217},
  bibsource    = {dblp computer science bibliography, https://dblp.org}
}

@article{Battaglia2018,
  author       = {Peter W. Battaglia and
                  Jessica B. Hamrick and
                  Victor Bapst and
                  Alvaro Sanchez{-}Gonzalez and
                  Vin{\'{\i}}cius Flores Zambaldi and
                  Mateusz Malinowski and
                  Andrea Tacchetti and
                  David Raposo and
                  Adam Santoro and
                  Ryan Faulkner and
                  {\c{C}}aglar G{\"{u}}l{\c{c}}ehre and
                  H. Francis Song and
                  Andrew J. Ballard and
                  Justin Gilmer and
                  George E. Dahl and
                  Ashish Vaswani and
                  Kelsey R. Allen and
                  Charles Nash and
                  Victoria Langston and
                  Chris Dyer and
                  Nicolas Heess and
                  Daan Wierstra and
                  Pushmeet Kohli and
                  Matthew M. Botvinick and
                  Oriol Vinyals and
                  Yujia Li and
                  Razvan Pascanu},
  title        = {Relational inductive biases, deep learning, and graph networks},
  journal      = {CoRR},
  volume       = {abs/1806.01261},
  year         = {2018},
  url          = {http://arxiv.org/abs/1806.01261},
  eprinttype    = {arXiv},
  eprint       = {1806.01261},
  bibsource    = {dblp computer science bibliography, https://dblp.org}
}

@ARTICLE{Fortunato2022,
       author = {{Fortunato}, Meire and {Pfaff}, Tobias and {Wirnsberger}, Peter and {Pritzel}, Alexander and {Battaglia}, Peter},
        title = "{MultiScale MeshGraphNets}",
      journal = {arXiv e-prints},
         year = 2022,
        month = oct,
          eid = {arXiv:2210.00612},
        pages = {arXiv:2210.00612},
          doi = {10.48550/arXiv.2210.00612},
archivePrefix = {arXiv},
       eprint = {2210.00612},
 primaryClass = {cs.LG},
       adsurl = {https://ui.adsabs.harvard.edu/abs/2022arXiv221000612F}
}

@online{comsol,
  author = {Comsol multiphysics®},
  title = {Comsol},
  year = 2020,
  url = {http://comsol.com}
}

@article{cimlib,
author = {Digonnet, Hugues and Silva, Luisa and Coupez, Thierry},
year = {2007},
month = {05},
pages = {269-274},
title = {Cimlib: A Fully Parallel Application For Numerical Simulations Based On Components Assembly},
volume = {908},
journal = {AIP Conference Proceedings},
doi = {10.1063/1.2740823}
}

@ARTICLE{gnn,
  author={Scarselli, Franco and Gori, Marco and Tsoi, Ah Chung and Hagenbuchner, Markus and Monfardini, Gabriele},
  journal={IEEE Transactions on Neural Networks}, 
  title={The Graph Neural Network Model}, 
  year={2009},
  volume={20},
  number={1},
  pages={61-80},
  doi={10.1109/TNN.2008.2005605}}

@article{Suk_2024,
   title={Mesh neural networks for SE(3)-equivariant hemodynamics estimation on the artery wall},
   volume={173},
   ISSN={0010-4825},
   url={http://dx.doi.org/10.1016/j.compbiomed.2024.108328},
   DOI={10.1016/j.compbiomed.2024.108328},
   journal={Computers in Biology and Medicine},
   publisher={Elsevier BV},
   author={Suk, Julian and de Haan, Pim and Lippe, Phillip and Brune, Christoph and Wolterink, Jelmer M.},
   year={2024},
   month=may, pages={108328} }

@misc{dwivedi2022graphneuralnetworkslearnable,
      title={Graph Neural Networks with Learnable Structural and Positional Representations}, 
      author={Vijay Prakash Dwivedi and Anh Tuan Luu and Thomas Laurent and Yoshua Bengio and Xavier Bresson},
      year={2022},
      eprint={2110.07875},
      archivePrefix={arXiv},
      primaryClass={cs.LG},
      url={https://arxiv.org/abs/2110.07875}, 
}

@misc{dauphin2017language,
      title={Language Modeling with Gated Convolutional Networks}, 
      author={Yann N. Dauphin and Angela Fan and Michael Auli and David Grangier},
      year={2017},
      eprint={1612.08083},
      archivePrefix={arXiv},
      primaryClass={cs.CL}
}

@misc{loshchilov2019decoupledweightdecayregularization,
      title={Decoupled Weight Decay Regularization}, 
      author={Ilya Loshchilov and Frank Hutter},
      year={2019},
      eprint={1711.05101},
      archivePrefix={arXiv},
      primaryClass={cs.LG},
      url={https://arxiv.org/abs/1711.05101}, 
}

@misc{garnier2024multigridgraphneuralnetworks,
      title={Multi-Grid Graph Neural Networks with Self-Attention for Computational Mechanics}, 
      author={Paul Garnier and Jonathan Viquerat and Elie Hachem},
      year={2024},
      eprint={2409.11899},
      archivePrefix={arXiv},
      primaryClass={cs.LG},
      url={https://arxiv.org/abs/2409.11899}, 
}

@misc{garnier2025trainingtransformersmeshbasedsimulations,
      title={Training Transformers for Mesh-Based Simulations}, 
      author={Paul Garnier and Vincent Lannelongue and Jonathan Viquerat and Elie Hachem},
      year={2025},
      eprint={2508.18051},
      archivePrefix={arXiv},
      primaryClass={cs.LG},
      url={https://arxiv.org/abs/2508.18051}, 
}

@misc{garnier2024meshmask,
      title={MeshMask: Physics-Based Simulations with Masked Graph Neural Networks}, 
      author={Paul Garnier and Vincent Lannelongue and Jonathan Viquerat and Elie Hachem},
      year={2025},
      eprint={2501.08738},
      archivePrefix={arXiv},
      primaryClass={cs.LG},
      url={https://arxiv.org/abs/2501.08738}, 
}

@article{RAISSI2019686,
title = {Physics-informed neural networks: A deep learning framework for solving forward and inverse problems involving nonlinear partial differential equations},
journal = {Journal of Computational Physics},
volume = {378},
pages = {686-707},
year = {2019},
issn = {0021-9991},
doi = {https://doi.org/10.1016/j.jcp.2018.10.045},
url = {https://www.sciencedirect.com/science/article/pii/S0021999118307125},
author = {M. Raissi and P. Perdikaris and G.E. Karniadakis}
}

@misc{lam2023graphcastlearningskillfulmediumrange,
      title={GraphCast: Learning skillful medium-range global weather forecasting}, 
      author={Remi Lam and Alvaro Sanchez-Gonzalez and Matthew Willson and Peter Wirnsberger and Meire Fortunato and Ferran Alet and Suman Ravuri and Timo Ewalds and Zach Eaton-Rosen and Weihua Hu and Alexander Merose and Stephan Hoyer and George Holland and Oriol Vinyals and Jacklynn Stott and Alexander Pritzel and Shakir Mohamed and Peter Battaglia},
      year={2023},
      eprint={2212.12794},
      archivePrefix={arXiv},
      primaryClass={cs.LG},
      url={https://arxiv.org/abs/2212.12794}, 
}

@misc{wu2024transolverfasttransformersolver,
      title={Transolver: A Fast Transformer Solver for PDEs on General Geometries}, 
      author={Haixu Wu and Huakun Luo and Haowen Wang and Jianmin Wang and Mingsheng Long},
      year={2024},
      eprint={2402.02366},
      archivePrefix={arXiv},
      primaryClass={cs.LG},
      url={https://arxiv.org/abs/2402.02366}, 
}

@inproceedings{digiovanni2023oversquashing,
  title={On over-squashing in message passing neural networks: The impact of width, depth, and topology},
  author={Di Giovanni, Francesco and Giusti, Lorenzo and Barbero, Federico and Luise, Giulia and Lio, Pietro and Bronstein, Michael M},
  booktitle={International conference on machine learning},
  pages={7865--7885},
  year={2023},
  organization={PMLR}
}

@article{Gragg1964,
  title = {Generalized Multistep Predictor-Corrector Methods},
  volume = {11},
  ISSN = {1557-735X},
  url = {http://dx.doi.org/10.1145/321217.321223},
  DOI = {10.1145/321217.321223},
  number = {2},
  journal = {Journal of the ACM},
  publisher = {Association for Computing Machinery (ACM)},
  author = {Gragg,  William B. and Stetter,  Hans J.},
  year = {1964},
  month = apr,
  pages = {188–209}
}

@article{liu2024deepseek,
  title={Deepseek-v2: A strong, economical, and efficient mixture-of-experts language model},
  author={Liu, Aixin and Feng, Bei and Wang, Bin and Wang, Bingxuan and Liu, Bo and Zhao, Chenggang and Dengr, Chengqi and Ruan, Chong and Dai, Damai and Guo, Daya and others},
  journal={arXiv preprint arXiv:2405.04434},
  year={2024}
}

@misc{kimiteam2025kimilinearexpressiveefficient,
      title={Kimi Linear: An Expressive, Efficient Attention Architecture}, 
      author={KimiTeam and Yu Zhang and Zongyu Lin and Xingcheng Yao and Jiaxi Hu and Fanqing Meng and Chengyin Liu and Xin Men and Songlin Yang and Zhiyuan Li and Wentao Li and Enzhe Lu and Weizhou Liu and Yanru Chen and Weixin Xu and Longhui Yu and Yejie Wang and Yu Fan and Longguang Zhong and Enming Yuan and Dehao Zhang and Yizhi Zhang and T. Y. Liu and Haiming Wang and Shengjun Fang and Weiran He and Shaowei Liu and Yiwei Li and Jianlin Su and Jiezhong Qiu and Bo Pang and Junjie Yan and Zhejun Jiang and Weixiao Huang and Bohong Yin and Jiacheng You and Chu Wei and Zhengtao Wang and Chao Hong and Yutian Chen and Guanduo Chen and Yucheng Wang and Huabin Zheng and Feng Wang and Yibo Liu and Mengnan Dong and Zheng Zhang and Siyuan Pan and Wenhao Wu and Yuhao Wu and Longyu Guan and Jiawen Tao and Guohong Fu and Xinran Xu and Yuzhi Wang and Guokun Lai and Yuxin Wu and Xinyu Zhou and Zhilin Yang and Yulun Du},
      year={2025},
      eprint={2510.26692},
      archivePrefix={arXiv},
      primaryClass={cs.CL},
      url={https://arxiv.org/abs/2510.26692}, 
}

@article{Cai2021,
  title = {Physics-informed neural networks (PINNs) for fluid mechanics: a review},
  volume = {37},
  ISSN = {1614-3116},
  url = {http://dx.doi.org/10.1007/s10409-021-01148-1},
  DOI = {10.1007/s10409-021-01148-1},
  number = {12},
  journal = {Acta Mechanica Sinica},
  publisher = {Springer Science and Business Media LLC},
  author = {Cai,  Shengze and Mao,  Zhiping and Wang,  Zhicheng and Yin,  Minglang and Karniadakis,  George Em},
  year = {2021},
  month = dec,
  pages = {1727–1738}
}

@article{belkin2006manifold,
  title={Manifold regularization: A geometric framework for learning from labeled and unlabeled examples.},
  author={Belkin, Mikhail and Niyogi, Partha and Sindhwani, Vikas},
  journal={Journal of machine learning research},
  volume={7},
  number={11},
  year={2006}
}

@article{zhang2018link,
  title={Link prediction based on graph neural networks},
  author={Zhang, Muhan and Chen, Yixin},
  journal={Advances in neural information processing systems},
  volume={31},
  year={2018}
}

@article{alsentzer2020subgraph,
  title={Subgraph neural networks},
  author={Alsentzer, Emily and Finlayson, Samuel and Li, Michelle and Zitnik, Marinka},
  journal={Advances in Neural Information Processing Systems},
  volume={33},
  pages={8017--8029},
  year={2020}
}

@misc{deepseekai2025deepseekv3technicalreport,
      title={DeepSeek-V3 Technical Report}, 
      author={DeepSeek-AI and Aixin Liu and Bei Feng and Bing Xue and Bingxuan Wang and Bochao Wu and Chengda Lu and Chenggang Zhao and Chengqi Deng and Chenyu Zhang and Chong Ruan and Damai Dai and Daya Guo and Dejian Yang and Deli Chen and Dongjie Ji and Erhang Li and Fangyun Lin and Fucong Dai and Fuli Luo and Guangbo Hao and Guanting Chen and Guowei Li and H. Zhang and Han Bao and Hanwei Xu and Haocheng Wang and Haowei Zhang and Honghui Ding and Huajian Xin and Huazuo Gao and Hui Li and Hui Qu and J. L. Cai and Jian Liang and Jianzhong Guo and Jiaqi Ni and Jiashi Li and Jiawei Wang and Jin Chen and Jingchang Chen and Jingyang Yuan and Junjie Qiu and Junlong Li and Junxiao Song and Kai Dong and Kai Hu and Kaige Gao and Kang Guan and Kexin Huang and Kuai Yu and Lean Wang and Lecong Zhang and Lei Xu and Leyi Xia and Liang Zhao and Litong Wang and Liyue Zhang and Meng Li and Miaojun Wang and Mingchuan Zhang and Minghua Zhang and Minghui Tang and Mingming Li and Ning Tian and Panpan Huang and Peiyi Wang and Peng Zhang and Qiancheng Wang and Qihao Zhu and Qinyu Chen and Qiushi Du and R. J. Chen and R. L. Jin and Ruiqi Ge and Ruisong Zhang and Ruizhe Pan and Runji Wang and Runxin Xu and Ruoyu Zhang and Ruyi Chen and S. S. Li and Shanghao Lu and Shangyan Zhou and Shanhuang Chen and Shaoqing Wu and Shengfeng Ye and Shengfeng Ye and Shirong Ma and Shiyu Wang and Shuang Zhou and Shuiping Yu and Shunfeng Zhou and Shuting Pan and T. Wang and Tao Yun and Tian Pei and Tianyu Sun and W. L. Xiao and Wangding Zeng and Wanjia Zhao and Wei An and Wen Liu and Wenfeng Liang and Wenjun Gao and Wenqin Yu and Wentao Zhang and X. Q. Li and Xiangyue Jin and Xianzu Wang and Xiao Bi and Xiaodong Liu and Xiaohan Wang and Xiaojin Shen and Xiaokang Chen and Xiaokang Zhang and Xiaosha Chen and Xiaotao Nie and Xiaowen Sun and Xiaoxiang Wang and Xin Cheng and Xin Liu and Xin Xie and Xingchao Liu and Xingkai Yu and Xinnan Song and Xinxia Shan and Xinyi Zhou and Xinyu Yang and Xinyuan Li and Xuecheng Su and Xuheng Lin and Y. K. Li and Y. Q. Wang and Y. X. Wei and Y. X. Zhu and Yang Zhang and Yanhong Xu and Yanhong Xu and Yanping Huang and Yao Li and Yao Zhao and Yaofeng Sun and Yaohui Li and Yaohui Wang and Yi Yu and Yi Zheng and Yichao Zhang and Yifan Shi and Yiliang Xiong and Ying He and Ying Tang and Yishi Piao and Yisong Wang and Yixuan Tan and Yiyang Ma and Yiyuan Liu and Yongqiang Guo and Yu Wu and Yuan Ou and Yuchen Zhu and Yuduan Wang and Yue Gong and Yuheng Zou and Yujia He and Yukun Zha and Yunfan Xiong and Yunxian Ma and Yuting Yan and Yuxiang Luo and Yuxiang You and Yuxuan Liu and Yuyang Zhou and Z. F. Wu and Z. Z. Ren and Zehui Ren and Zhangli Sha and Zhe Fu and Zhean Xu and Zhen Huang and Zhen Zhang and Zhenda Xie and Zhengyan Zhang and Zhewen Hao and Zhibin Gou and Zhicheng Ma and Zhigang Yan and Zhihong Shao and Zhipeng Xu and Zhiyu Wu and Zhongyu Zhang and Zhuoshu Li and Zihui Gu and Zijia Zhu and Zijun Liu and Zilin Li and Ziwei Xie and Ziyang Song and Ziyi Gao and Zizheng Pan},
      year={2025},
      eprint={2412.19437},
      archivePrefix={arXiv},
      primaryClass={cs.CL},
      url={https://arxiv.org/abs/2412.19437}, 
}

@misc{gloeckle2024betterfasterlarge,
      title={Better \& Faster Large Language Models via Multi-token Prediction}, 
      author={Fabian Gloeckle and Badr Youbi Idrissi and Baptiste Rozière and David Lopez-Paz and Gabriel Synnaeve},
      year={2024},
      eprint={2404.19737},
      archivePrefix={arXiv},
      primaryClass={cs.CL},
      url={https://arxiv.org/abs/2404.19737}, 
}

@book{butcher2016numerical,
  title={Numerical Methods for Ordinary Differential Equations},
  author={Butcher, John C.},
  year={2016},
  publisher={Wiley}
}

@article{Runge1895,
  title = {Ueber die numerische Aufl{\"o}sung von Differentialgleichungen},
  volume = {46},
  ISSN = {1432-1807},
  url = {http://dx.doi.org/10.1007/BF01446807},
  DOI = {10.1007/bf01446807},
  number = {2},
  journal = {Mathematische Annalen},
  publisher = {Springer Science and Business Media LLC},
  author = {Runge,  C.},
  year = {1895},
  month = jun,
  pages = {167–178}
}

@misc{wu2022pointtransformerv2grouped,
      title={Point Transformer V2: Grouped Vector Attention and Partition-based Pooling}, 
      author={Xiaoyang Wu and Yixing Lao and Li Jiang and Xihui Liu and Hengshuang Zhao},
      year={2022},
      eprint={2210.05666},
      archivePrefix={arXiv},
      primaryClass={cs.CV},
      url={https://arxiv.org/abs/2210.05666}, 
}

@misc{zhao2021pointtransformer,
      title={Point Transformer}, 
      author={Hengshuang Zhao and Li Jiang and Jiaya Jia and Philip Torr and Vladlen Koltun},
      year={2021},
      eprint={2012.09164},
      archivePrefix={arXiv},
      primaryClass={cs.CV},
      url={https://arxiv.org/abs/2012.09164}, 
}

@book{kutta1901beitrag,
  title={Beitrag zur n{\"a}herungsweisen Integration totaler Differentialgleichungen},
  author={Kutta, Wilhelm},
  year={1901},
  publisher={Teubner}
}

@misc{wu2021rethinkingimprovingrelativeposition,
      title={Rethinking and Improving Relative Position Encoding for Vision Transformer}, 
      author={Kan Wu and Houwen Peng and Minghao Chen and Jianlong Fu and Hongyang Chao},
      year={2021},
      eprint={2107.14222},
      archivePrefix={arXiv},
      primaryClass={cs.CV},
      url={https://arxiv.org/abs/2107.14222}, 
}

@misc{shaw2018selfattentionrelativepositionrepresentations,
      title={Self-Attention with Relative Position Representations}, 
      author={Peter Shaw and Jakob Uszkoreit and Ashish Vaswani},
      year={2018},
      eprint={1803.02155},
      archivePrefix={arXiv},
      primaryClass={cs.CL},
      url={https://arxiv.org/abs/1803.02155}, 
}

@misc{dosovitskiy2021imageworth16x16words,
      title={An Image is Worth 16x16 Words: Transformers for Image Recognition at Scale}, 
      author={Alexey Dosovitskiy and Lucas Beyer and Alexander Kolesnikov and Dirk Weissenborn and Xiaohua Zhai and Thomas Unterthiner and Mostafa Dehghani and Matthias Minderer and Georg Heigold and Sylvain Gelly and Jakob Uszkoreit and Neil Houlsby},
      year={2021},
      eprint={2010.11929},
      archivePrefix={arXiv},
      primaryClass={cs.CV},
      url={https://arxiv.org/abs/2010.11929}, 
}

@misc{alkin2025abuptscalingneuralcfd,
      title={AB-UPT: Scaling Neural CFD Surrogates for High-Fidelity Automotive Aerodynamics Simulations via Anchored-Branched Universal Physics Transformers}, 
      author={Benedikt Alkin and Maurits Bleeker and Richard Kurle and Tobias Kronlachner and Reinhard Sonnleitner and Matthias Dorfer and Johannes Brandstetter},
      year={2025},
      eprint={2502.09692},
      archivePrefix={arXiv},
      primaryClass={cs.LG},
      url={https://arxiv.org/abs/2502.09692}, 
}

@misc{qiu2025gatedattentionlargelanguage,
      title={Gated Attention for Large Language Models: Non-linearity, Sparsity, and Attention-Sink-Free}, 
      author={Zihan Qiu and Zekun Wang and Bo Zheng and Zeyu Huang and Kaiyue Wen and Songlin Yang and Rui Men and Le Yu and Fei Huang and Suozhi Huang and Dayiheng Liu and Jingren Zhou and Junyang Lin},
      year={2025},
      eprint={2505.06708},
      archivePrefix={arXiv},
      primaryClass={cs.CL},
      url={https://arxiv.org/abs/2505.06708}, 
}

@article{Liu2025,
  title = {Automatic network structure discovery of physics informed neural networks via knowledge distillation},
  volume = {16},
  ISSN = {2041-1723},
  url = {http://dx.doi.org/10.1038/s41467-025-64624-3},
  DOI = {10.1038/s41467-025-64624-3},
  number = {1},
  journal = {Nature Communications},
  publisher = {Springer Science and Business Media LLC},
  author = {Liu,  Ziti and Liu,  Yang and Yan,  Xunshi and Liu,  Wen and Nie,  Han and Guo,  Shuaiqi and Zhang,  Chen-an},
  year = {2025},
  month = oct 
}

@book{hairer1993solving,
  title={Solving Ordinary Differential Equations I: Nonstiff Problems},
  author={Hairer, Ernst and N{\o}rsett, Syvert P. and Wanner, Gerhard},
  volume={8},
  series={Springer Series in Computational Mathematics},
  year={1993},
  publisher={Springer}
}
\bibliographystyle{icml2026}

\newpage
\appendix
\onecolumn
\section{Datasets}
\label{appendix:datasets}

\begin{figure}[!t]
  \centering
  \includegraphics[width=0.99\textwidth]{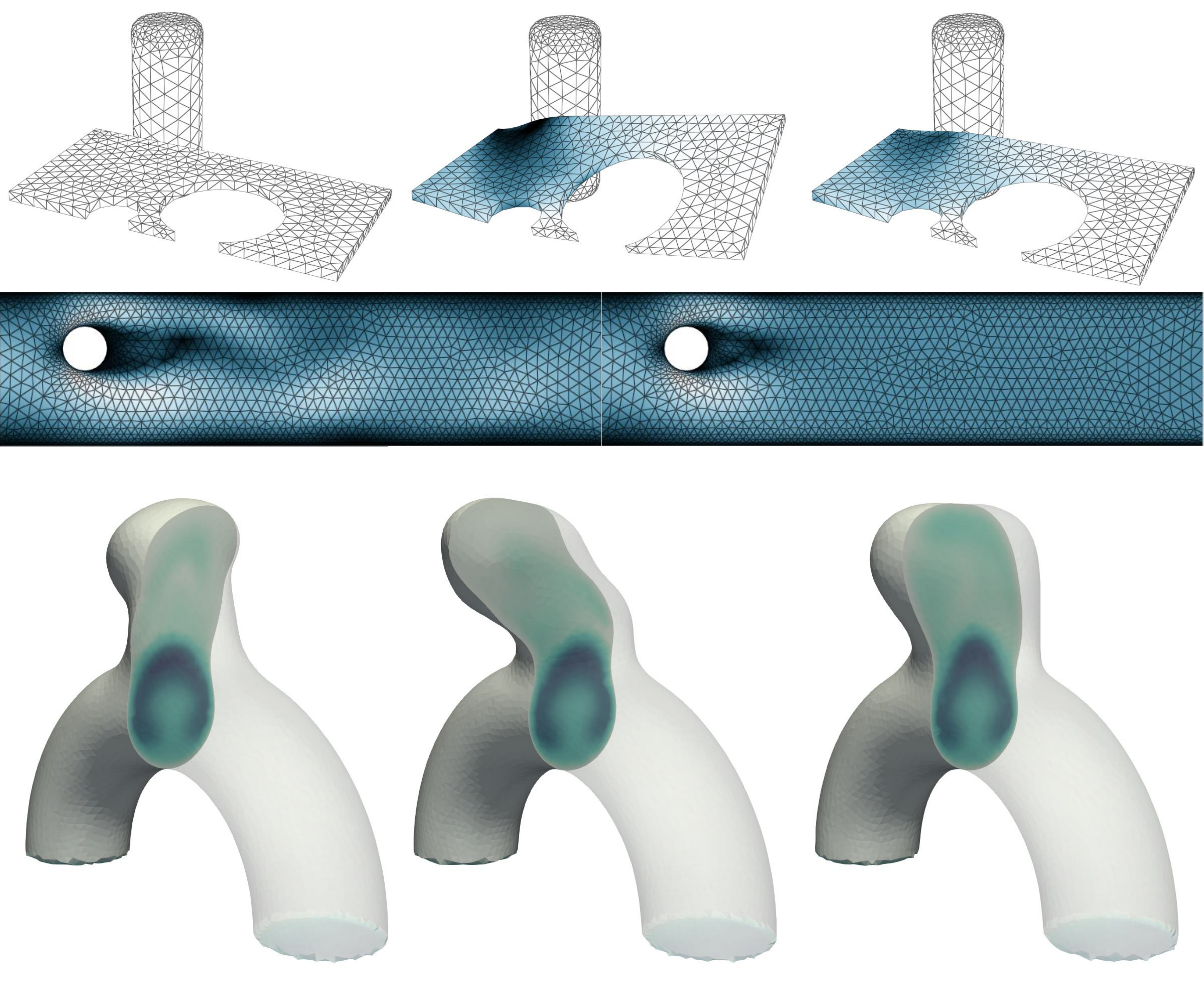}
\caption{
We display details of our three \crblue{primary} datasets: DeformingPlate, Cylinder and \crblue{Coarse} Aneurysm. For the first two, we also display the mesh used for the simulations, while we discard it from the Aneurysm visualisation for practicality purposes.
} 
  \label{fig:datasetsdetails}
\end{figure}

We give details below about the inputs and outputs used for each dataset (see Table \ref{tab:datasets-details} and \autoref{fig:datasetsdetails}). \dataset{Cylinder} and DeformingPlate were generated with COMSOL \cite{comsol} and were introduced by \cite{pfaff2021learning}. \dataset{Aneurysm} was generated with CimLib \cite{cimlib} and was introduced by \cite{aneurysmdataset,garnier2024meshmask}. 

\renewcommand{\arraystretch}{1.1}
\begin{table}[t]
\centering
\begin{small}
\caption{Datasets used in the experiments, their features as well as if some temporal history is used. Finally, we also present the average number of nodes per mesh, \crblue{together with solver, train/test trajectories, rollout length, and time step}.}
\label{tab:datasets-details}
\hypertarget{upd:dataset-table}{}\resizebox{\textwidth}{!}{%
\begin{tabular}{|p{28mm}||c|c|c|c|c|c|c|c|c|}
	\hline
	\textbf{Dataset} &  
	\textbf{Inputs} &
	\textbf{Outputs} &
	\textbf{History} &
	\textbf{Nodes} &
    \crblue{\textbf{Solver}} &
    \crblue{\textbf{Train}} &
    \crblue{\textbf{Test}} &
    \crblue{\textbf{Steps}} &
    \crblue{$\boldsymbol{\Delta t}$} \\
	\hline\hline
    \dataset{Cylinder} & $n, v_x, v_y$ & $v_x, v_y$ & 0 & 2000 & \crblue{COMSOL} & \crblue{1000} & \crblue{100} & \crblue{600} & \crblue{0.01s} \\
    \hline
    \crblue{\dataset{DeformingPlate}} & $n, x, y, z, f_{\text{in}}$ & $x, y, z, \sigma$ & 0 & 1000 & \crblue{COMSOL} & \crblue{1000} & \crblue{100} & \crblue{400} & \crblue{--} \\
    \hline
    \crblue{\dataset{Coarse Aneurysm}} & $n, v_x, v_y, v_z, v_{\text{in}}$ & $v_x, v_y, v_z$ & 1 & \crblue{20000} & \crblue{CimLib} & \crblue{100} & \crblue{20} & \crblue{80} & \crblue{0.01s} \\
    \hline
    \crblue{\dataset{Fine Aneurysm}} & \crblue{$n, v_x, v_y, v_z, v_{\text{in}}$} & \crblue{$v_x, v_y, v_z$} & \crblue{1} & \crblue{250000} & \crblue{CimLib} & \crblue{100} & \crblue{20} & \crblue{80} & \crblue{0.01s} \\
    \hline
\end{tabular}}
\end{small}
\end{table}
\renewcommand{\arraystretch}{1.0}

In Table \ref{tab:datasets-details}, $n$ is the node type (Inflow, Outflow, Wall, Obstacle, Normal) and $v_{\text{in}}$ the inflow velocity at the current timestep. \crblue{Boundary values are enforced at prediction time using these node types: velocity is enforced for all node types except Normal and Outflow, and pressure is enforced only for Inlet nodes.} When history is different than $0$, we use a first-order derivative of the inputs as an extra feature. For example, we add $a_x, a_y, a_z$ to each node from an aneurysm mesh.

\subsection{Noise}

We display the noise used as well as the global nodes selected for each dataset in Table \ref{tab:noises}.

\paragraph{Noise Scale} We make our inputs noisy by following the same strategy as \cite{sanchezgonzalez2020learning}.
We add random noise $\mathcal{N}(0,\sigma)$ to the dynamical inputs. Each noise standard deviation was selected using the following procedure. We train one model without any noise and then compute the distribution of the one-step error. We use this error deviation as the noise scale. For the case of the aneurysm dataset, the variational nature of the inflow makes for said distribution to be time-dependant. While we investigated different strategies, such as time-dependant noise, we simply used the larger standard deviation possible($\max\limits_{t \in [1, T-1]}\sigma _t$). 

\renewcommand{\arraystretch}{1.1}
\begin{table}[t]
\centering
\begin{small}
\caption{Noise levels applied to each dataset, per feature, during training.}
\label{tab:noises}
\begin{tabular}{|p{35.5mm}||c|}
	\hline
	\textbf{Dataset} & \textbf{Noise} \\
	\hline\hline
    \dataset{Cylinder} & 0.02 \\
    \hline
    \dataset{Plate} & 0.003 \\
    \hline
    \dataset{3D-Aneurysm} & $v_x, v_y$: 10,\; $v_z$: 1 \\
    \hline
\end{tabular}
\end{small}
\end{table}
\renewcommand{\arraystretch}{1.0}

\section{Theoretical Analysis}
\label{sec:theoretical_analysis}

We provide theoretical grounding for our architectural contributions by connecting them to two classical themes in numerical PDEs:
(i) \emph{spatial consistency}, where stability and accuracy depend on controlling derivative (flux) errors rather than only pointwise errors; and
(ii) \emph{time-integration stability}, where explicit one-step methods are prone to drift for stiff semi-discrete systems.
Throughout, we view the mesh/graph as nodes $\{ \mathbf{x}_i \}_{i=1}^N \subset \Omega \subset \mathbb{R}^d$ with edges given by mesh adjacency.
We denote the (maximum) local mesh size by
$h := \max_{(i,j)\in E} \| \mathbf{x}_j - \mathbf{x}_i \|$ and the neighborhood of node $i$ by $\mathcal{N}(i)$.
For simplicity the analysis is stated for a scalar field $u:\Omega\to\mathbb{R}$; the vector-valued case $u:\Omega\to\mathbb{R}^c$ follows componentwise with identical bounds.

\subsection{Multi Node Prediction as Sobolev Regularization}
\label{subsec:theory_mnp}

Standard surrogates usually minimize a node-wise $L^2$ error:
\[
\mathcal{L}_{\mathrm{node}}
:= \frac{1}{N} \sum_{i=1}^N \big\| u(\mathbf{x}_i) - \hat{u}_i \big\|^2
\]
which controls pointwise accuracy but does not directly control errors in spatial derivatives.
Yet, many PDE operators (in strong or weak form) depend on gradients and fluxes, e.g.\ diffusion energies $\int_\Omega \|\nabla u\|^2$ or nonlinear fluxes $f(u,\nabla u)$.
Consequently, purely node-wise supervision can allow locally inconsistent stencils (large derivative/flux errors) even when pointwise errors are small, a behavior that is particularly costly in long rollouts. MNP augments node-wise supervision with \emph{patch} supervision: for each center node $i$, the model produces predictions for its neighborhood:
\[
\hat{u}_{j|i} \approx u(\mathbf{x}_j) \qquad j\in \mathcal{N}(i)
\]
In our implementation (Sec.~\ref{subsec:mnp}), these predictions are produced by a one-layer star transformer that processes the center latent together with freshly encoded neighbor features. Formally, one can write:
\[
(\mathbf{o}_{k|i})_{k\in\{i\}\cup\mathcal{N}(i)}
=
T_{\theta'}\!\Big([\mathbf{z}^L_i;\ \{\mathbf{z}^0_j\}_{j\in\mathcal{N}(i)}]\Big)
\qquad
\hat{u}_{j|i}=\mathcal{D}_{\theta}(\mathbf{o}_{j|i})
\]
but the analysis below is \emph{architecture-agnostic}: it only depends on the resulting scalar predictions $\hat u_{j|i}$ and therefore applies equally to (i) direct decoding from $\mathbf{z}^L_i$ and (ii) the implemented auxiliary-transformer variant.

We define the MNP loss as the following:
\begin{equation}
\mathcal{L}_{\mathrm{MNP}}
\;:=\;
\frac{1}{N}\sum_{i=1}^N \sum_{j\in\mathcal{N}(i)} w_{ij}\,\big\| \hat{u}_{j|i} - u(\mathbf{x}_j) \big\|^2
\label{eq:mnp_loss}
\end{equation}
for nonnegative weights $w_{ij}$. Importantly, the discrete gradients used below depend on differences $(v_j-v_i)$ and therefore necessarily involve the \emph{center} error $|\hat u_i-u(\mathbf{x}_i)|$. In our training setup this term is provided by $\mathcal{L}_{\mathrm{node}}$ (i.e., $\mathcal{L}_{\mathrm{main}}$ in Sec.~\ref{subsec:mnp}).

Given values $\{v_j\}_{j\in\{i\}\cup\mathcal{N}(i)}$, define the \emph{local weighted least-squares} (WLS) discrete gradient $\nabla_h v(\mathbf{x}_i)\in\mathbb{R}^d$ as:
\begin{equation}
\nabla_h v(\mathbf{x}_i)
\;:=\;
\arg\min_{\mathbf{g}\in\mathbb{R}^d}
\sum_{j\in\mathcal{N}(i)} w_{ij}\,\big| (v_j-v_i) - \mathbf{g}^\top(\mathbf{x}_j-\mathbf{x}_i) \big|^2
\label{eq:lsq_grad_def}
\end{equation}
Let $\mathbf{B}_i\in\mathbb{R}^{|\mathcal{N}(i)|\times d}$ have rows $(\mathbf{x}_j-\mathbf{x}_i)^\top$ and $\mathbf{W}_i:=\mathrm{diag}(w_{ij})$.
With $\mathbf{d}_i(v):=(v_j-v_i)_{j\in\mathcal{N}(i)}$, the minimizer has the closed form
\begin{equation}
\nabla_h v(\mathbf{x}_i)
=
\Big( \mathbf{B}_i^\top \mathbf{W}_i \mathbf{B}_i \Big)^{-1}\mathbf{B}_i^\top \mathbf{W}_i\,\mathbf{d}_i(v)
\label{eq:lsq_grad_closed}
\end{equation}
whenever $\mathbf{B}_i^\top \mathbf{W}_i \mathbf{B}_i$ is invertible.

\begin{assumption}[]
\label{ass:wls_regular}
For every node $i$, the weights satisfy $w_{ij}\ge 0$ and are normalized as $\sum_{j\in\mathcal{N}(i)} w_{ij}=1$.
Moreover, there exist constants $0<c_0\le c_1<\infty$, independent of $h$ and $i$, such that the weighted moment matrix obeys the uniform spectral bounds
\begin{equation}
c_0\,h^2\,\mathbf{I}_d
\;\preceq\;
\mathbf{B}_i^\top \mathbf{W}_i \mathbf{B}_i
\;\preceq\;
c_1\,h^2\,\mathbf{I}_d
\label{eq:patch_nondegeneracy}
\end{equation}
These conditions hold for quasi-uniform/shape-regular meshes with bounded-degree neighborhoods and standard distance-based weights normalized to unit sum, provided each neighborhood spans $\mathbb{R}^d$ with a uniform angle condition.
\end{assumption}

\begin{theorem}[\textbf{MNP + node-wise supervision controls a discrete $H^1$ seminorm}]
\label{thm:sobolev}
Assume $u\in C^2(\Omega)$ and Assumption~\ref{ass:wls_regular} holds.
Define the \emph{predicted patch} at node $i$ by setting $v_i=\hat u_i$ (the standard node-wise prediction) and $v_j=\hat u_{j|i}$ for $j\in\mathcal{N}(i)$, and let
$\widehat{\nabla}_h u(\mathbf{x}_i):=\nabla_h v(\mathbf{x}_i)$ be the WLS gradient \eqref{eq:lsq_grad_def} computed from these predicted values.
Then there exists a constant $C>0$, independent of $h$, such that for every node $i$,
\begin{equation}
\big\| \widehat{\nabla}_h u(\mathbf{x}_i) - \nabla u(\mathbf{x}_i) \big\|^2
\;\le\;
\frac{C}{h^2}\!\left(
\sum_{j\in\mathcal{N}(i)} w_{ij}\,\big|\hat{u}_{j|i} - u(\mathbf{x}_j)\big|^2
+
\big|\hat u_i - u(\mathbf{x}_i)\big|^2
\right)
\;+\;
C\,h^2\,\|u\|_{C^2(\Omega)}^2
\label{eq:h1_bound_local}
\end{equation}
Consequently,
\begin{equation}
\frac{1}{N}\sum_{i=1}^N
\big\| \widehat{\nabla}_h u(\mathbf{x}_i) - \nabla u(\mathbf{x}_i) \big\|^2
\;\le\;
\frac{C}{h^2}\Big(\mathcal{L}_{\mathrm{MNP}}+\mathcal{L}_{\mathrm{node}}\Big)
\;+\;
C\,h^2\,\|u\|_{C^2(\Omega)}^2
\label{eq:h1_bound_global}
\end{equation}
The same result holds componentwise for vector-valued fields $u:\Omega\to\mathbb{R}^c$.
\end{theorem}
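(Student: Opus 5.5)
We split the WLS gradient error into a \emph{consistency} part and a \emph{perturbation} part, exploiting the linearity of the closed form \eqref{eq:lsq_grad_closed}. Write $\mathbf{M}_i:=\mathbf{B}_i^\top\mathbf{W}_i\mathbf{B}_i$ and $\mathbf{P}_i:=\mathbf{M}_i^{-1}\mathbf{B}_i^\top\mathbf{W}_i$, so that $\nabla_h v(\mathbf{x}_i)=\mathbf{P}_i\,\mathbf{d}_i(v)$ for any patch values $v$. Let $u^\ast$ denote the exact patch values $u^\ast_j=u(\mathbf{x}_j)$ and let $e_i:=\hat u_i-u(\mathbf{x}_i)$, $e_{j}:=\hat u_{j|i}-u(\mathbf{x}_j)$ be the patch errors. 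Then
\[
\widehat{\nabla}_h u(\mathbf{x}_i)-\nabla u(\mathbf{x}_i)
=\underbrace{\mathbf{P}_i\,\mathbf{d}_i(u^\ast)-\nabla u(\mathbf{x}_i)}_{\text{consistency}}
+\underbrace{\mathbf{P}_i\,(e_j-e_i)_{j\in\mathcal{N}(i)}}_{\text{perturbation}} ,
\]
and by $(a+b)^2\le 2a^2+2b^2$ it suffices to bound the squares of each term separately.

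\textbf{Consistency term.} Since $\mathbf{P}_i\mathbf{B}_i=\mathbf{I}_d$ (the estimator reproduces linears), we have $\mathbf{P}_i\mathbf{B}_i\nabla u(\mathbf{x}_i)=\nabla u(\mathbf{x}_i)$. Taylor's theorem gives $u(\mathbf{x}_j)-u(\mathbf{x}_i)=\nabla u(\mathbf{x}_i)^\top(\mathbf{x}_j-\mathbf{x}_i)+r_j$ with $|r_j|\le \tfrac12\|u\|_{C^2}h^2$; here I will assume the segment $[\mathbf{x}_i,\mathbf{x}_j]\subset\Omega$, which holds for interior stencils and is the natural mesh setting. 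So the consistency term equals $\mathbf{P}_i r$, and the problem reduces to bounding $\|\mathbf{P}_i\mathbf{y}\|$ for a generic vector $\mathbf{y}$.

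\textbf{Key operator estimate.} I claim
\[
\|\mathbf{P}_i\mathbf{y}\|^2\le \frac{1}{c_0h^2}\sum_{j} w_{ij}y_j^2 .
\]
To see this, write $\mathbf{P}_i\mathbf{y}=\mathbf{M}_i^{-1}\mathbf{B}_i^\top\mathbf{W}_i^{1/2}(\mathbf{W}_i^{1/2}\mathbf{y})$. We have
\[
\|\mathbf{M}_i^{-1}\mathbf{B}_i^\top\mathbf{W}_i^{1/2}\|^2=\|\mathbf{M}_i^{-1}\mathbf{B}_i^\top\mathbf{W}_i\mathbf{B}_i\mathbf{M}_i^{-1}\|=\|\mathbf{M}_i^{-1}\|\le (c_0h^2)^{-1}
\]
by Assumption~\ref{ass:wls_regular}, and $\|\mathbf{W}_i^{1/2}\mathbf{y}\|^2=\sum_j w_{ij}y_j^2$. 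Applying this with $\mathbf{y}=r$ and using $\sum_j w_{ij}=1$ bounds the consistency term by
\[
\frac{1}{c_0h^2}\cdot\tfrac14\|u\|_{C^2}^2h^4=\tfrac{1}{4c_0}\,h^2\|u\|_{C^2}^2 ,
\]
which is the $Ch^2\|u\|_{C^2}^2$ term.

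\textbf{Perturbation term.} Apply the same estimate with $y_j=e_j-e_i$, and use $(e_j-e_i)^2\le 2e_j^2+2e_i^2$ together with $\sum_j w_{ij}=1$:
\[
\sum_j w_{ij}(e_j-e_i)^2\le 2\sum_j w_{ij}|\hat u_{j|i}-u(\mathbf{x}_j)|^2+2|\hat u_i-u(\mathbf{x}_i)|^2 .
\]
This yields \eqref{eq:h1_bound_local} with $C=\max(4/c_0,\,1/(2c_0))$, a constant independent of $h$ and $i$ (only $c_0$ is needed).

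\textbf{Global bound and vector case.} Averaging \eqref{eq:h1_bound_local} over $i$ gives \eqref{eq:h1_bound_global}: the weighted patch sums average to $\mathcal{L}_{\mathrm{MNP}}$ by \eqref{eq:mnp_loss}, the center errors average to $\mathcal{L}_{\mathrm{node}}$, and the constant term is unchanged. For vector-valued $u$, apply the argument to each component and sum.

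The only delicate step is the operator estimate. The weighted norm must appear exactly as it does in $\mathcal{L}_{\mathrm{MNP}}$, and this is achieved by factoring through $\mathbf{W}_i^{1/2}$ rather than using a crude bound on $\|\mathbf{B}_i\|$, which would cost a factor of the neighborhood degree.
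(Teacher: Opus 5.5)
Your proof is correct. It uses the same decomposition as the paper: the consistency error of the WLS gradient on exact values, plus the WLS operator applied to $(e_j-e_i)_j$. Both proofs recombine with $(a+b)^2\le 2a^2+2b^2$ and use the same bound $\sum_j w_{ij}|e_j-e_i|^2\le 2\sum_j w_{ij}|e_j|^2+2|e_i|^2$. The difference is in how you make the two estimates.

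For the perturbation, the paper bounds the operator by the product $\|(\mathbf{B}_i^\top\mathbf{W}_i\mathbf{B}_i)^{-1}\|\,\|\mathbf{B}_i^\top\mathbf{W}_i^{1/2}\|\le\sqrt{c_1}/(c_0h)$. That product overshoots the true operator norm by a factor $\sqrt{\lambda_{\max}(\mathbf{M}_i)/\lambda_{\min}(\mathbf{M}_i)}$. Your exact identity $\|\mathbf{M}_i^{-1}\mathbf{B}_i^\top\mathbf{W}_i^{1/2}\|^2=\|\mathbf{M}_i^{-1}\|$ gives $1/(\sqrt{c_0}\,h)$ instead. Your constant is therefore never worse, and it shows that the upper spectral bound $c_1$ in Assumption~\ref{ass:wls_regular} is not needed for this theorem.

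For the consistency term, the paper argues variationally. It evaluates the WLS objective $J_i$ at $\mathbf{g}=\nabla u(\mathbf{x}_i)$, where the residuals are the Taylor remainders. It then uses $J_i(\mathbf{g})-J_i(\mathbf{g}^\ast)=(\mathbf{g}-\mathbf{g}^\ast)^\top\mathbf{M}_i(\mathbf{g}-\mathbf{g}^\ast)\ge c_0h^2\|\mathbf{g}-\mathbf{g}^\ast\|^2$, which is strong convexity. You instead use exact reproduction of linear functions, $\mathbf{P}_i\mathbf{B}_i=\mathbf{I}_d$, to write the consistency error as $\mathbf{P}_i r$, and then reuse your operator lemma.

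Both routes give the same bound $\frac{1}{c_0h^2}\sum_j w_{ij}r_{ij}^2$. Yours gains a single lemma covering both terms and an explicit constant $C=4/c_0$. The paper's convexity step does not depend on the closed form of the minimizer.

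You also state that the Taylor step needs $[\mathbf{x}_i,\mathbf{x}_j]\subset\Omega$. The paper relies on this assumption without saying so.
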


\begin{proof}
Fix a node $i$ and denote $\mathbf{d}_{ij}:=\mathbf{x}_j-\mathbf{x}_i$.
Let $v$ and $\tilde v$ be two sets of patch values on $\{i\}\cup\mathcal{N}(i)$.
From \eqref{eq:lsq_grad_closed}, we have
\[
\nabla_h v(\mathbf{x}_i)-\nabla_h \tilde v(\mathbf{x}_i)
=
\Big(\mathbf{B}_i^\top \mathbf{W}_i\mathbf{B}_i\Big)^{-1}\mathbf{B}_i^\top \mathbf{W}_i\,\Big(\mathbf{d}_i(v)-\mathbf{d}_i(\tilde v)\Big)
\]
Taking norms and using $\|\mathbf{B}_i^\top \mathbf{W}_i\mathbf{q}\|
\le \|\mathbf{B}_i^\top \mathbf{W}_i^{1/2}\|\,\|\mathbf{q}\|_{\mathbf{W}_i}$ yields
\[
\big\|\nabla_h v(\mathbf{x}_i)-\nabla_h \tilde v(\mathbf{x}_i)\big\|
\le
\big\|\big(\mathbf{B}_i^\top \mathbf{W}_i\mathbf{B}_i\big)^{-1}\big\|\,
\big\|\mathbf{B}_i^\top \mathbf{W}_i^{1/2}\big\|\,
\big\|\mathbf{d}_i(v)-\mathbf{d}_i(\tilde v)\big\|_{\mathbf{W}_i}
\]
By Assumption~\ref{ass:wls_regular},
$\|(\mathbf{B}_i^\top \mathbf{W}_i\mathbf{B}_i)^{-1}\|\le (c_0 h^2)^{-1}$ and
$\|\mathbf{B}_i^\top \mathbf{W}_i^{1/2}\|^2=\lambda_{\max}(\mathbf{B}_i^\top \mathbf{W}_i\mathbf{B}_i)\le c_1 h^2$,
so we ahve the following:
\begin{equation}
\big\|\nabla_h v(\mathbf{x}_i)-\nabla_h \tilde v(\mathbf{x}_i)\big\|
\;\le\;
\frac{\sqrt{c_1}}{c_0}\,\frac{1}{h}\,
\big\|\mathbf{d}_i(v)-\mathbf{d}_i(\tilde v)\big\|_{\mathbf{W}_i}
\label{eq:lsq_sensitivity_bound}
\end{equation}

Now set $v_i=\hat u_i$, $v_j=\hat u_{j|i}$ for $j\in\mathcal{N}(i)$, and $\tilde v_k=u(\mathbf{x}_k)$ for all $k$.
Define the errors $e_i:=\hat u_i-u(\mathbf{x}_i)$ and $e_{j|i}:=\hat u_{j|i}-u(\mathbf{x}_j)$.
Then for each $j\in\mathcal{N}(i)$:
\[
\big(\mathbf{d}_i(v)-\mathbf{d}_i(\tilde v)\big)_j
=
(\hat u_{j|i}-\hat u_i)-(u(\mathbf{x}_j)-u(\mathbf{x}_i))
=
e_{j|i}-e_i
\]
hence
\[
\big\|\mathbf{d}_i(v)-\mathbf{d}_i(\tilde v)\big\|_{\mathbf{W}_i}^2
=
\sum_{j\in\mathcal{N}(i)} w_{ij}\,|e_{j|i}-e_i|^2
\le
2\sum_{j\in\mathcal{N}(i)} w_{ij}\,|e_{j|i}|^2
+
2\Big(\sum_{j\in\mathcal{N}(i)} w_{ij}\Big)\,|e_i|^2
\]
Using the normalization $\sum_{j} w_{ij}=1$ in Assumption~\ref{ass:wls_regular} and combining with \eqref{eq:lsq_sensitivity_bound} gives us:
\begin{equation}
\big\|\nabla_h \hat u_{\cdot|i}(\mathbf{x}_i)-\nabla_h u(\mathbf{x}_i)\big\|^2
\;\le\;
\frac{C}{h^2}\!\left(
\sum_{j\in\mathcal{N}(i)} w_{ij}\,|e_{j|i}|^2
+
|e_i|^2
\right)
\label{eq:grad_diff_bound_correct}
\end{equation}
for a constant $C$ independent of $h$.

It remains to relate $\nabla_h u(\mathbf{x}_i)$ to the true gradient $\nabla u(\mathbf{x}_i)$.
By Taylor expansion, for each $j\in\mathcal{N}(i)$, we have:
\[
u(\mathbf{x}_j) - u(\mathbf{x}_i)
=
\nabla u(\mathbf{x}_i)^\top \mathbf{d}_{ij}
+
r_{ij}
\qquad
|r_{ij}|\le C_1\,\|u\|_{C^2(\Omega)}\,\|\mathbf{d}_{ij}\|^2 \le C_1\,\|u\|_{C^2(\Omega)}\,h^2
\]
Evaluating the WLS objective \eqref{eq:lsq_grad_def} at $\mathbf{g}=\nabla u(\mathbf{x}_i)$ yields residuals $r_{ij}$ and thus
\[
J_i(\nabla u(\mathbf{x}_i);u)
=
\sum_{j\in\mathcal{N}(i)} w_{ij}\,|r_{ij}|^2
\le
C_2\,h^4\,\|u\|_{C^2(\Omega)}^2
\]
where we used again $\sum_j w_{ij}=1$.
Since $\nabla_h u(\mathbf{x}_i)$ minimizes $J_i(\cdot;u)$ and $J_i(\nabla_h u(\mathbf{x}_i);u)\ge 0$, strong convexity of $J_i$ with Hessian $2\mathbf{B}_i^\top\mathbf{W}_i\mathbf{B}_i$ implies
\[
c_0\,h^2\,\big\|\nabla_h u(\mathbf{x}_i)-\nabla u(\mathbf{x}_i)\big\|^2
\le
J_i(\nabla u(\mathbf{x}_i);u)
\le
C_2\,h^4\,\|u\|_{C^2(\Omega)}^2
\]
hence
\begin{equation}
\big\|\nabla_h u(\mathbf{x}_i)-\nabla u(\mathbf{x}_i)\big\|^2
\le
C_3\,h^2\,\|u\|_{C^2(\Omega)}^2
\label{eq:grad_recovery_error}
\end{equation}

Finally, by the triangle inequality and $(a+b)^2\le 2a^2+2b^2$, combining \eqref{eq:grad_diff_bound_correct} and \eqref{eq:grad_recovery_error} yields \eqref{eq:h1_bound_local}.
Averaging \eqref{eq:h1_bound_local} over $i$ gives \eqref{eq:h1_bound_global}, with $\mathcal{L}_{\mathrm{MNP}}$ and $\mathcal{L}_{\mathrm{node}}$ as defined above.
\end{proof}

\autoref{thm:sobolev} formalizes MNP as a Sobolev-type regularizer: patch reconstruction accuracy controls a discrete gradient (hence flux) error.
Importantly, because the WLS gradient uses differences $(v_j-v_i)$, the gradient bound \emph{must} depend on both neighbor errors ($\mathcal{L}_{\mathrm{MNP}}$) and the center error ($\mathcal{L}_{\mathrm{node}}$).
In our training objective $\mathcal{L}=\mathcal{L}_{\mathrm{main}}+\alpha\,\mathcal{L}_{\mathrm{MNP}}$, the center term is exactly $\mathcal{L}_{\mathrm{main}}$ (up to the choice of $\ell$), so controlling the total loss controls an averaged discrete $H^1$-type error.
If one additionally supervises the center token inside the MNP star, the same analysis holds with $\mathcal{L}_{\mathrm{node}}$ absorbed into the patch loss.

\subsection{Temporal Correction as Predictor-Corrector Integration}
\label{subsec:theory_temporal}

A common neural rollout uses a ResNet-style one-step update $\mathbf{Z}^{t+1}=\mathbf{Z}^t+\Phi(\mathbf{Z}^t)$.
When $\Phi(\mathbf{Z}^t)\approx \Delta t\,F(\mathbf{Z}^t)$ for some latent vector field $F$, this is algebraically equivalent to \emph{Forward Euler} on the ODE $\dot{\mathbf{Z}}=F(\mathbf{Z})$.
Forward Euler is only conditionally stable; for stiff semi-discrete PDE systems this produces long-horizon drift.

We interpret our temporal block as a learned predictor-corrector:
\begin{equation}
\begin{aligned}
\tilde{\mathbf{Z}}^{t+1} &= \mathbf{Z}^t + \Phi^s(\mathbf{Z}^t)\\
\mathbf{Z}^{t+1} &= \mathbf{Z}^t + \Phi^t(\tilde{\mathbf{Z}}^{t+1}, \mathbf{Z}^t)
\end{aligned}
\label{eq:predictor_corrector_stage}
\end{equation}
where $\Phi^t$ is implemented by cross-attention between the predicted future $\tilde{\mathbf{Z}}^{t+1}$ (query) and the history/current state $\mathbf{Z}^t$ (keys/values), followed by a bounded gate.
This design gives the model enough information to approximate integrators that depend on both $t$ and $t{+}1$ states, i.e.\ implicit or semi-implicit schemes.

\begin{theorem}[\textbf{Stability region expansion via an embedded $\theta$-method}]
\label{thm:theta_stability}
Consider the scalar linear test equation $y'(t)=\lambda y(t)$ with $\Re(\lambda)\le 0$.
For $\theta\in[0,1]$, the classical $\theta$-method is defined by the one-step update
\begin{equation}
y_{n+1}
=
y_n + \Delta t\big((1-\theta)\lambda y_n + \theta \lambda y_{n+1}\big)
\qquad\Longleftrightarrow\qquad
y_{n+1} = R_\theta(z)\,y_n,\;\; z:=\Delta t\,\lambda
\label{eq:theta_method_def}
\end{equation}
with amplification factor
\begin{equation}
R_\theta(z) = \frac{1+(1-\theta)z}{1-\theta z}
\label{eq:theta_amplification}
\end{equation}
Then:
\begin{enumerate}
\item (\emph{A-stability}) If $\theta\ge \tfrac12$, then $|R_\theta(z)|\le 1$ for all $z\in\mathbb{C}$ with $\Re(z)\le 0$.
\item (\emph{Strict expansion over Forward Euler}) Forward Euler is the special case $\theta=0$ with $R_0(z)=1+z$, which is stable only for $z$ in the disk $|1+z|\le 1$.
In contrast, any $\theta\ge \tfrac12$ is stable for \emph{all} $\Re(z)\le 0$, i.e.\ removes the CFL-type restriction for the test problem.
\end{enumerate}
Moreover, the two-stage update \eqref{eq:predictor_corrector_stage} can represent the family \eqref{eq:theta_method_def} by learning an effective $\theta$ through the bounded gate and by using the future-conditioned correction to approximate the implicit dependence on $y_{n+1}$.
Hence, the temporal correction block enlarges the class of stable one-step maps that the network can realize compared to an explicit residual update.
\end{theorem}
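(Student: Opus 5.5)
We treat the three claims separately: the A-stability of the $\theta$-method, the comparison with Forward Euler, and the representability statement for the two-stage block \eqref{eq:predictor_corrector_stage}.

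\textbf{Amplification factor.} First we derive \eqref{eq:theta_amplification}. Rearranging \eqref{eq:theta_method_def} gives $(1-\theta z)y_{n+1}=(1+(1-\theta)z)y_n$. The factor $1-\theta z$ never vanishes when $\Re z\le 0$: if $\theta=0$ it equals $1$, and if $\theta>0$ then $\Re(1-\theta z)\ge 1$.

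\textbf{A-stability.} Write $z=x+iy$ with $x\le 0$. Then $|R_\theta(z)|\le 1$ is equivalent to
\[
(1+(1-\theta)x)^2+(1-\theta)^2y^2\le(1-\theta x)^2+\theta^2y^2 .
\]
Expanding both sides and cancelling, this reduces to
\[
2x+(1-2\theta)(x^2+y^2)\le 0 .
\]
For $\theta\ge\tfrac12$ and $x\le 0$, both terms are nonpositive, so the inequality holds. This single computation is the core of part 1.

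\textbf{Comparison with Forward Euler.} Setting $\theta=0$ gives $R_0(z)=1+z$, so the stability region is exactly the closed disk $|1+z|\le 1$. To show the expansion is strict, we exhibit a point of the left half-plane outside this disk, for instance $z=-3$. There $|R_0(-3)|=2>1$, whereas part 1 gives $|R_\theta(-3)|\le 1$ for every $\theta\ge\tfrac12$.

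\textbf{Representability by the two-stage block.} This is the part that is not a pure calculation, and it is the main obstacle, because ``can represent'' is informal. We make it precise on the linear test problem with an idealized, linear $\Phi^t$ in place of the learned block.

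The predictor is taken to be the explicit step $\tilde y=(1+z)y_n$. We then give two constructions.

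\emph{Exact construction.} Let the corrector be an affine function of $(\tilde y,y_n)$ whose coefficients may depend on $z$. Choosing $\Phi^t(\tilde y,y_n)=(R_\theta(z)-1)y_n$, which is realizable whenever the corrector is expressive enough to output any scalar multiple of $y_n$, gives $y_{n+1}=R_\theta(z)y_n$ exactly.

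\emph{Closer to the architecture.} Use a gated correction $y_{n+1}=y_n+z\big((1-g)y_n+g\,\tilde y\big)$ with gate $g\in[0,1]$. This is the explicit predictor--corrector (Heun-type) approximation of the $\theta$-method with $\theta=g$, and its amplification factor $1+z+gz^2$ agrees with $R_\theta(z)$ up to $O(z^2)$. If the predictor is iterated to convergence, or the query-conditioned correction is allowed to approximate the fixed point, the limit is exactly $R_\theta$.

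The bounded gate is what supplies $\theta\in[0,1]$. Since $\theta=0$ recovers the explicit residual update, the set of realizable one-step maps contains the Forward Euler map together with A-stable ones. Combined with the strict expansion shown above, this yields the final sentence of the theorem.

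In the write-up we will state explicitly that this last part is an expressivity statement about the idealized linear corrector, not a statement about the trained network.
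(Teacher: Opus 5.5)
Parts 1 and 2 follow the paper's proof essentially identically. The paper also writes $z=a+ib$, clears denominators, and obtains (denominator minus numerator) $=-2a+(2\theta-1)(a^2+b^2)\ge 0$, which is the negative of your reduced inequality. It then rules out Forward Euler by noting that the disk $|1+z|\le 1$ excludes large negative real $z$. Two small refinements of yours go beyond the paper: you check that $1-\theta z\neq 0$ for $\Re z\le 0$, where the paper only writes ``provided $1-\theta z\neq 0$'', and you give the explicit witness $z=-3$.

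The paper's proof ends after the Forward Euler disk and gives no argument for the ``Moreover'' clause. Your representability section therefore goes beyond the paper, and one remark in it is wrong.

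\begin{itemize}
\item \textbf{The gated Heun-type step is not A-stable.} Its amplification factor $1+z+gz^2$ is a polynomial, so it is unbounded on $\Re z\le 0$ for every $g\in[0,1]$. At your own witness $z=-3$ with $g=\tfrac12$ it equals $2.5$.
\item \textbf{Iterating to convergence does not repair this.} The functional iteration $y^{(k+1)}=(1+(1-\theta)z)\,y_n+\theta z\,y^{(k)}$ multiplies the error by $\theta z$ at each sweep. It therefore diverges exactly when $|\theta z|\ge 1$, which is the stiff regime A-stability is about. This is the classical reason PECE schemes do not inherit the stability of their implicit corrector. You should drop that sentence or restrict it to $|\theta z|<1$.
\item \textbf{The exact construction proves less than it appears to.} It is valid, but it never uses $\tilde y$. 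An explicit residual update with the same freedom, $\Phi(y_n)=(R_\theta(z)-1)\,y_n$, realizes the same map. So the claimed enlargement holds only against a baseline pinned to literal Forward Euler, $\Phi(y)=zy$, which is the premise $\Phi\approx\Delta t\,F$ the paper adopts. You should state that comparison class explicitly, alongside your disclaimer about idealized correctors.
\end{itemize}
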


\begin{proof}
The equivalence in \eqref{eq:theta_method_def} is obtained by collecting the $y_{n+1}$ terms:
\[
(1-\theta z)\,y_{n+1} = (1+(1-\theta)z)\,y_n
\]
which yields \eqref{eq:theta_amplification} provided $1-\theta z\neq 0$.

We now prove A-stability for $\theta\ge \tfrac12$.
Let $z=a+ib$ with $a=\Re(z)\le 0$.
Then
\[
|R_\theta(z)|^2
=
\frac{|1+(1-\theta)z|^2}{|1-\theta z|^2}
=
\frac{(1+(1-\theta)a)^2 + (1-\theta)^2 b^2}{(1-\theta a)^2 + \theta^2 b^2}
\]
Thus, $|R_\theta(z)|\le 1$ is equivalent to showing the denominator is at least the numerator:
\begin{align*}
&(1-\theta a)^2 + \theta^2 b^2
-
\Big((1+(1-\theta)a)^2 + (1-\theta)^2 b^2\Big)\\
&=
\Big[(1-\theta a)^2 - (1+(1-\theta)a)^2\Big]
+
\Big[\theta^2-(1-\theta)^2\Big]b^2
\end{align*}
Expanding the squares,
\[
(1-\theta a)^2 - (1+(1-\theta)a)^2
=
(1-2\theta a+\theta^2 a^2) - (1+2(1-\theta)a+(1-\theta)^2 a^2)
=
-2a + (2\theta-1)a^2
\]
and also
\[
\theta^2-(1-\theta)^2 = \theta^2 - (1-2\theta+\theta^2) = 2\theta-1
\]
Therefore,
\begin{equation}
(1-\theta a)^2 + \theta^2 b^2
-
\Big((1+(1-\theta)a)^2 + (1-\theta)^2 b^2\Big)
=
-2a + (2\theta-1)(a^2+b^2)
\label{eq:theta_difference}
\end{equation}
If $a\le 0$, then $-2a\ge 0$; if $\theta\ge \tfrac12$, then $2\theta-1\ge 0$, and $a^2+b^2\ge 0$.
Hence the right-hand side of \eqref{eq:theta_difference} is nonnegative, proving the denominator dominates the numerator and thus $|R_\theta(z)|\le 1$.
This establishes A-stability for $\theta\ge \tfrac12$.

Finally, for $\theta=0$ we have $R_0(z)=1+z$, so stability requires $|1+z|\le 1$, i.e.\ the classical Forward Euler stability disk, which excludes large negative real $z$ and therefore imposes a step-size restriction for stiff modes.
This proves the strict stability region expansion claimed in the theorem.
\end{proof}

\paragraph{Interpretation.}
Theorem~\ref{thm:theta_stability} formalizes the numerical role of the temporal correction block:
a two-stage update that can emulate a $\theta$-method (e.g.\ $\theta=\tfrac12$ corresponding to Crank-Nicolson / trapezoidal rule) has a fundamentally larger stability region than Forward Euler.
In our architecture, the cross-attention (future-conditioned correction) supplies the information pathway needed to approximate the implicit dependence on the next state, while the bounded gate provides a mechanism to adapt the effective $\theta$ locally, damping stiff/high-frequency error modes that otherwise destabilize long rollouts.

\section{Additional experiments}
\label{sec:add-exp}



\begin{figure}[ht!]
  \centering
  \begin{minipage}[t]{0.48\textwidth}\vspace{0pt}
    \centering
    \includegraphics[width=\linewidth]{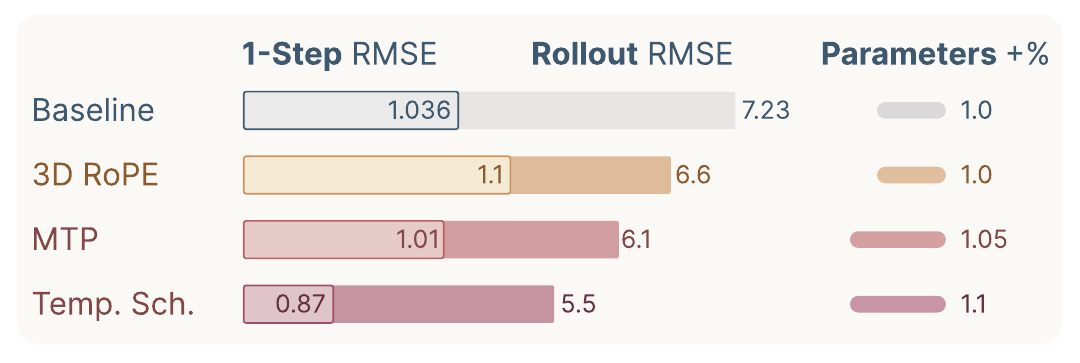}
    \captionof{figure}{\textbf{Impact of each upgrade.}
    We detail the improvement in term of 1-step and all-rollout RMSE on the Aneurysm dataset for the Transformer architecture. We also detail the increase in terms of trainable parameters.}
    \label{fig:improv}
  \end{minipage}\hfill
  \begin{minipage}[t]{0.48\textwidth}\vspace{0pt}
    \raggedright
    \paragraph{Width vs. Depth}
    For a given budget (in terms of trainable parameters), we study the optimal number of layers to achieve the lowest all-rollout RMSE. We display the results of a Transolver architecture on the Cylinder Dataset. Overall, we find a clear bottleneck around 15 layers, no matter the increase in width. We find a similar number of optimal layers in other models, and in other datasets. We believe this is related both to the solver used to generate datasets, and to the graph based architecture that struggles with vanishing gradient after too many spatial processing \cite{digiovanni2023oversquashing}. In lights of these results, we never go above 15 layers in any models we trained in this paper. Results are presented in \autoref{fig:width}.
  \end{minipage}
\end{figure}

\begin{figure*}[tbh!]
  \centering
  \includegraphics[width=0.99\textwidth]{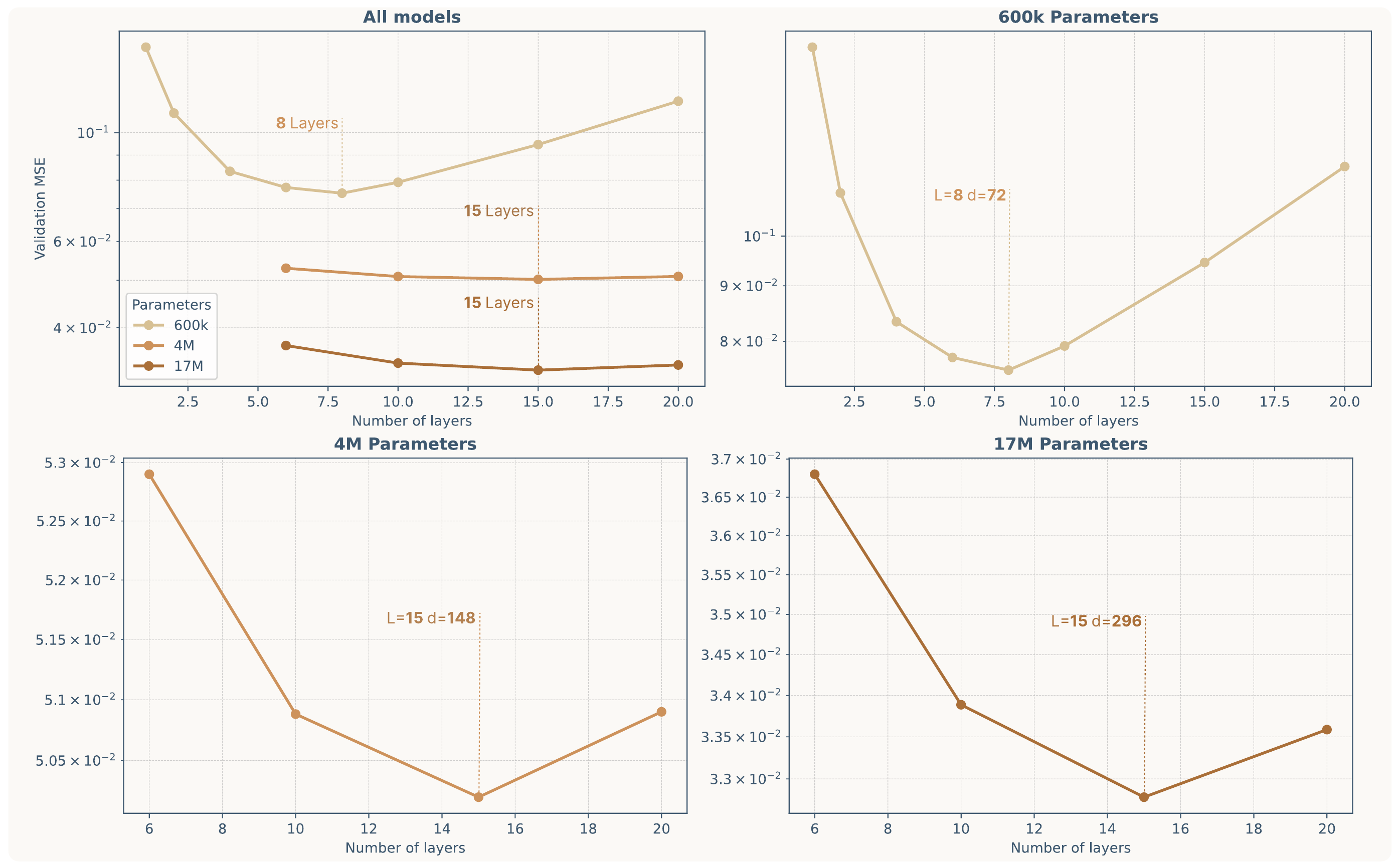}
  \caption{We study the performance of a transolver model under a fixed number of parameters. We train models with 600k, 4M and 17M parameters, and very accordingly the number of layers and the hidden dimension to keep the parameters constant. }
  \label{fig:width}
\end{figure*}

\paragraph{Additional Ablations} We present several additional experiments that were made in \autoref{fig:other-ablation}. First, we investigate attention variations:
\begin{itemize}
    \item Multi Head Latent Attention (MHLA) \cite{liu2024deepseek}
    \item Gated Delta Net in a 1 to 1 ratio \cite{kimiteam2025kimilinearexpressiveefficient}
    \item Gated Attention \cite{kimiteam2025kimilinearexpressiveefficient}
\end{itemize}

While MHLA does provide slightly better performance, it comes at a cost in terms of training time. Overall, we believe the tradeoff is not necessarily beneficial. We obtain similar findings to the two other approaches. 

Finally, we also investigate with two additional losses. \hypertarget{upd:physics-loss}{}The first one \crblue{consists of} adding the divergence of the predicted velocity field, 
(which is supposed to be zero since the fluids are incompressible). \crblue{This divergence residual improves 1-step RMSE, but it overfits the training distribution and increases all-rollout RMSE, so we do not retain it in the final framework. This observation is empirical for our setup and does not imply that physics-informed losses are generally ineffective.} We also added a second supervised loss: the cosine similarity between the target and predicted fields. The goal of such a loss is to \crblue{enforce} the predicted fields to also \crblue{align} in terms of orientations with the target. This offers a small improvement in long term prediction. 

\begin{figure*}[tbh!]
  \centering
  \includegraphics[width=0.99\textwidth]{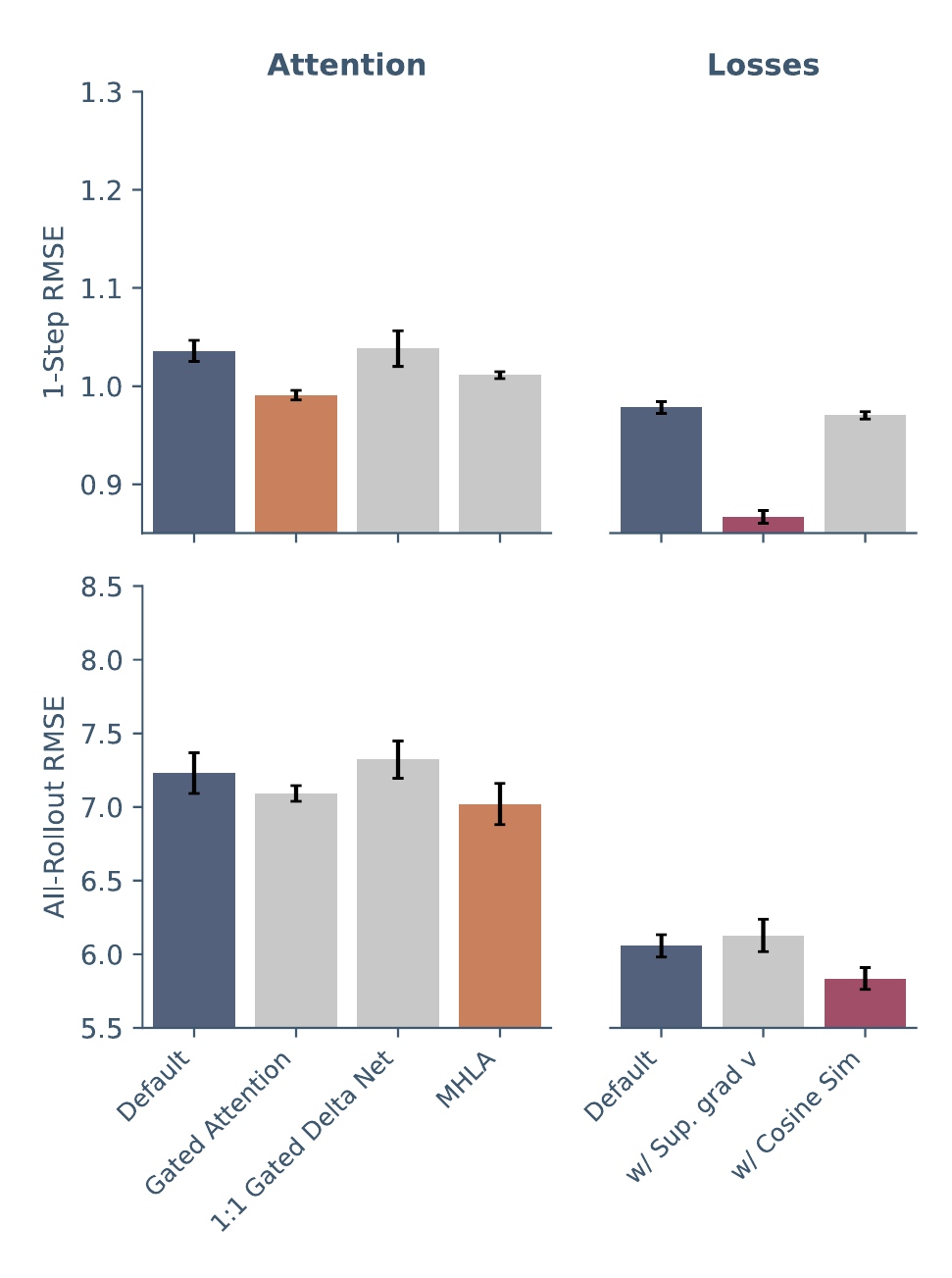}
  \caption{\textbf{Extra ablation studies.} We present the impact of several variants of attention and of different loss functions.}
  \label{fig:other-ablation}
\end{figure*}

\paragraph{Latent Representation}

We also extract the latent representation after half the spatial processing with and without Multi Node Prediction, compute a PCA of said representation, and then plot the three main components as RGB pixels. Results are presented in \autoref{fig:latentpca}. We can see that the latent representation with MNP is much crisper and interesting that the one without MNP. More interestingly, the latent representation closely resemble the pressure field at this time, while it was never shown in any inputs to the model. The latent representation at the first and final layer does not exhibit such phenomenon, being much closely related to the velocity field itself. This suggest that models may actually be learning adjacent physics on top of the task they are supervised on.

\begin{figure*}[tbh!]
  \centering
  \includegraphics[width=0.99\textwidth]{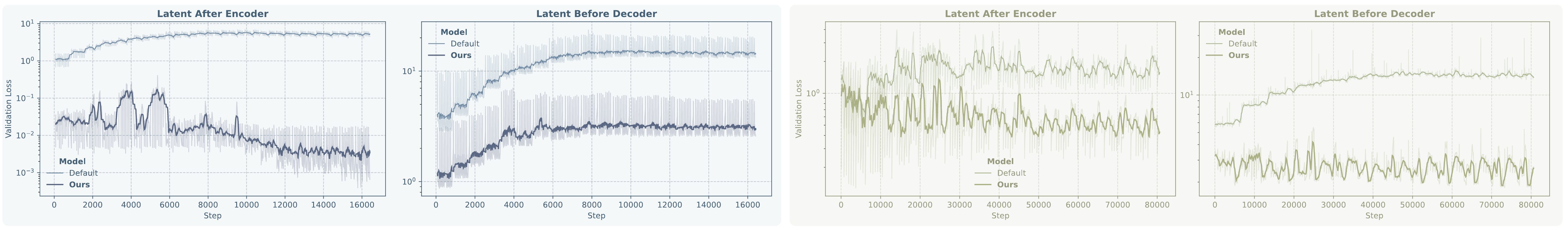}
  \caption{\textbf{Effect on latent representation.} While training architecture with and without Multi Node Prediction, we encode the next-step target to obtain a latent target, and compute it's difference with the current latent representation at each stage of the architecture. We display the differences after zero spatial processing, and after $L$ spatial processing.}
  \label{fig:latent}
\end{figure*}

\begin{figure*}[tbh!]
  \centering
  \includegraphics[width=0.99\textwidth]{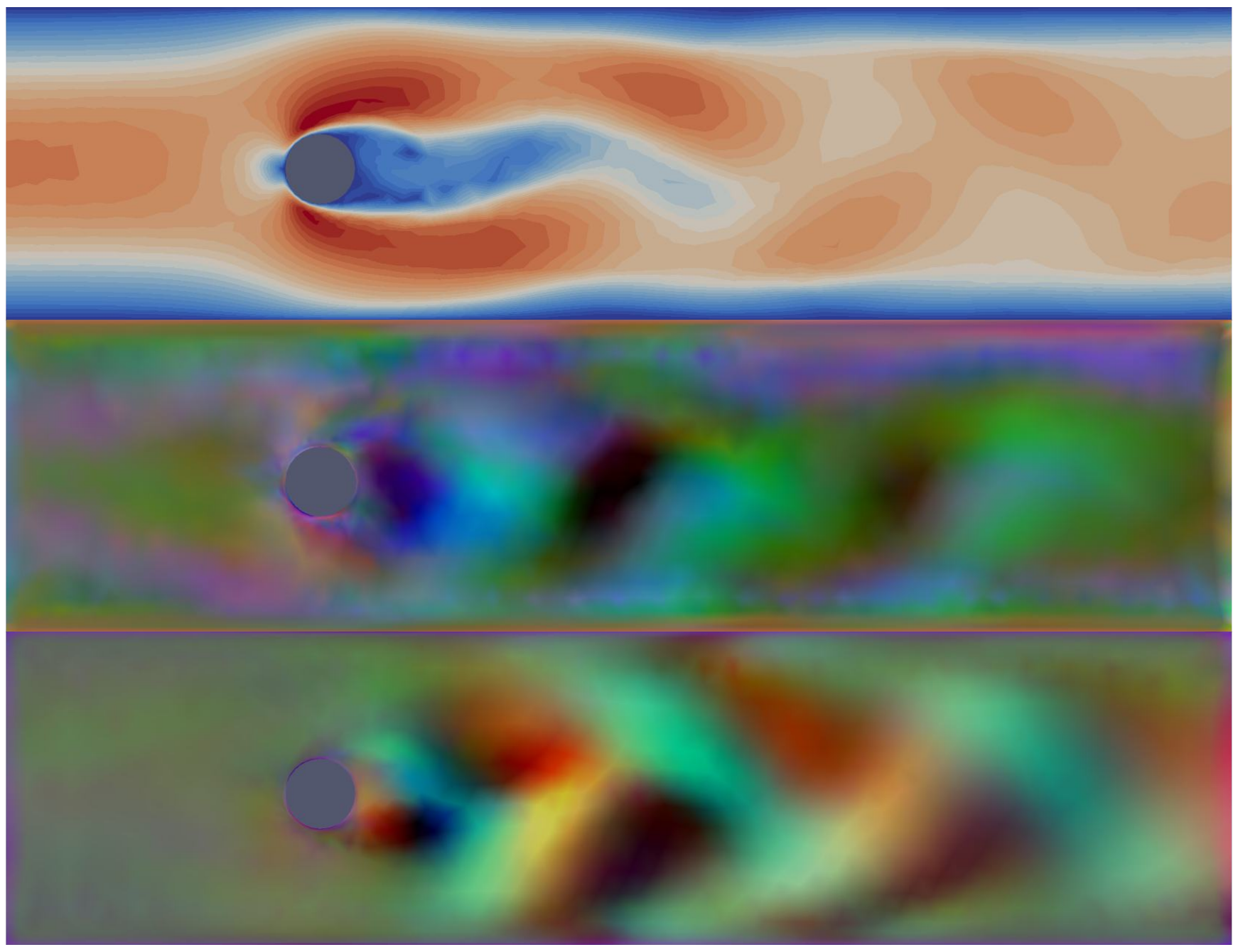}
  \caption{The next step velocity (target field) is presented in the top row. The latent representation without MNP is presented in the second row, while the latent representation with MNP is presented in the bottom row.}
  \label{fig:latentpca}
\end{figure*}

\end{document}